\documentclass[11pt]{article}
\RequirePackage[colorlinks,citecolor=blue,urlcolor=blue,linkcolor=blue]{hyperref}
\usepackage{url}

\setlength{\topmargin}{-.6in}
\setlength{\oddsidemargin}{-0cm}
\setlength{\evensidemargin}{-1cm}
\setlength{\textwidth}{16.5cm}
\setlength{\textheight}{23cm}

\DeclareSymbolFont{slenderlargesymbols}{OMX}{ccex}{m}{n}
\DeclareMathSymbol{\prod}{\mathop}{slenderlargesymbols}{"51}
\usepackage{latexsym,amssymb,amsmath,amsthm,graphics,graphicx,float,psfrag, epsfig, color, authblk, enumerate}

\usepackage{pgfplots}
\usepackage{url,amssymb,amsmath,amsthm,amscd,paralist,bbm,wrapfig}
\usepackage{bm}
\usepackage{algorithm}
\usepackage{algorithmic} 

\usepackage{mathtools}

\usepackage[T1]{fontenc}
\usepackage[numbers]{natbib}
\tikzstyle{vertex}=[circle,black, fill=black, draw, inner sep=0pt, minimum size=6pt]

\usepackage{tikz}
\usetikzlibrary{decorations.pathreplacing}
\usepackage{xcolor}
\usepackage{mathtools}
\definecolor{cof}{RGB}{219,144,71}
\definecolor{pur}{RGB}{186,146,162}
\definecolor{greeo}{RGB}{91,173,69}
\definecolor{greet}{RGB}{52,111,72}
\pgfplotsset{compat=1.14}

\providecommand{\customgenericname}{}
\newcommand{\newcustomtheorem}[2]{%
  \newenvironment{#1}[1]
  {%
   \renewcommand\customgenericname{#2}%
   \renewcommand\theinnercustomgeneric{##1}%
   \innercustomgeneric
  }
  {\endinnercustomgeneric}
}

\newcustomtheorem{customasmp}{Assumptions}

\newcommand{\R}{\mathbb{R}}
\newcommand{\N}{\mathcal{N}}

\newcommand{\Lcal}{\mathcal{L}}
\newcommand{\RC}{\mathsf{R}\mathsf{C}}
\newcommand{\Pro}{\mathbb{P}}
\newcommand{\one}{\operatorname{\mathbbm{1}}}
\newcommand{\Scal}{\mathcal{S}}
\newcommand{\T}{\mathrm{T}}
\renewcommand{\top}{\mathrm{T}}
\newcommand{\sgn}{\text{sgn}}
\newcommand{\diag}{\text{diag}}
\newcommand{\al}{\alpha}
\renewcommand{\epsilon}{\varepsilon}
\newcommand{\E}{\mathbb{E}}

\newcommand{\Bcal}{\mathcal{B}}
\newcommand{\dist}{\text{dist}}


\newtheorem{theorem}{Theorem}
\newtheorem{lemma}{Lemma}

\newtheorem{proposition}{Proposition}

\newtheorem{definition}{Definition}

\begin{document}
\title{Optimal Sample Complexity of Subgradient Descent for Amplitude Flow via Non-Lipschitz Matrix Concentration}
\author{Paul Hand\thanks{Department of Mathematics and College of Computer and Information Science, Northeastern University, Boston, MA}, Oscar Leong\thanks{Department of Computational and Applied Mathematics, Rice University, Houston, TX}, and Vladislav Voroninski\thanks{Helm.ai, Menlo Park, CA}}
\maketitle

\begin{abstract}

    We consider the problem of recovering a real-valued $n$-dimensional signal from $m$ phaseless, linear measurements and analyze the amplitude-based non-smooth least squares objective. We establish local convergence of subgradient descent with optimal sample complexity based on the uniform concentration of a random, discontinuous matrix-valued operator arising from the objective's gradient dynamics. While common techniques to establish uniform concentration of random functions exploit Lipschitz continuity, we prove that the discontinuous matrix-valued operator satisfies a uniform matrix concentration inequality when the measurement vectors are Gaussian as soon as $m = \Omega(n)$ with high probability. We then show that satisfaction of this inequality is sufficient for subgradient descent with proper initialization to converge linearly to the true solution up to the global sign ambiguity. As a consequence, this guarantees local convergence for Gaussian measurements at optimal sample complexity. The concentration methods in the present work have previously been used to establish recovery guarantees for a variety of inverse problems under generative neural network priors. This paper demonstrates the applicability of these techniques to more traditional inverse problems and serves as a pedagogical introduction to those results.
\end{abstract}
\section{Introduction}

Consider the problem of recovering a signal $x_* \in \R^n$ from $m$ phaseless measurements of the form \begin{align*}
    y := |Ax_*| + \eta
\end{align*} where $A \in \R^{m \times n}$ is a measurement matrix, $|\cdot|$ acts entrywise, and $\eta \in \R^m$ denotes noise. This problem is known as phase retrieval as, in practice, the phase of the signal is lost in the forward measurement process due to the underlying physics of the measurement system. We consider the case when the entries of $A$ are i.i.d. Gaussian, which we will refer to as the generic measurement regime. In this work, we aim to recover $x_*$ by solving the following non-smooth least squares problem \begin{align}
    \min_{x \in \R^n} f(x) := \frac{1}{2}\big\||Ax| - y\big\|^2. \label{amp_flow_obj}
\end{align} This objective function is known as Amplitude Flow.  For generic measurements, previous works have shown that with proper initialization, gradient descent both with \cite{Wang2018} and without \cite{AmpFlow} truncated gradients can recover the signal with the optimal sample complexity of $m = \Omega(n)$.

Existing proof techniques of convergence guarantees for (sub)gradient descent of \eqref{amp_flow_obj} follow a two-step process: 1) establish that spectral initialization or some variant thereof guarantees an initializer with relative error bounded by a small absolute constant and then 2) show that the objective satisfies a property akin to convexity near the minimizer to guarantee convergence. This latter property is called the \textit{local regularity condition} $\RC(\mu,\lambda,\epsilon)$\footnote{A function $\Lcal$ satisfies $\RC(\mu,\lambda,\epsilon)$ at a stationary point $y$ if for all $x \in \R^n$ such that $\|x-y\|\leqslant\epsilon\|y\|$, $\langle \nabla \Lcal(x),x-y\rangle \geqslant \frac{\mu}{2}\|x-y\|^2 + \frac{\lambda}{2}\|\nabla \Lcal(x)\|^2.$}. Showing that this condition holds is crucial in establishing local convergence for Amplitude Flow \cite{AmpFlow, Wang2018} and its variants \cite{SmoothAmpFlow}.

This proof technique is not unique to Amplitude Flow as it was initially introduced to guarantee convergence for the intensity-based formulation Wirtinger Flow, which aims to solve  \begin{align}
    \min_{x \in \R^n} \frac{1}{2}\big\||Ax|^2 - y^2\big\|^2. \label{wirtflowfunc}
\end{align} In the original work \cite{CandesLiSoltan2015}, the aforementioned two-step procedure established exact recovery with sample complexity $m = \Omega(n\log n)$. A follow-up variant using truncation \cite{ChenCandes2015} improved the sample complexity to $m = \Omega(n)$ and also employed the $\RC(\mu,\lambda,\epsilon)$ to show convergence post-initialization. Recently, \cite{YonelYazici2020} established the sufficiency of a deterministic condition for local convergence when solving \eqref{wirtflowfunc} in the lifted domain and a relationship between the condition's accuracy and convergence rate of gradient descent was shown. This deterministic condition is a uniform matrix concentration inequality that is proven to hold for generic measurements when $m = \Omega(n \log n)$. 

Other works include the global landscape analysis in \cite{Wright2016} which showed that \eqref{wirtflowfunc} exhibits benign geometry given a sufficient number of measurements $m = \Omega(n \log^3 n)$ by carefully analyzing the gradient and Hessian in partitioned regions of space. In \cite{DuchiRuan2018}, the authors considered the robust $\ell_1$ loss with intensity-based measurements and established local convergence of a prox-linear algorithm using composite optimization theory. Convex approaches based on lifting \cite{CSV2013, CandLi2014, DemanetHand14, Hand17} utilize dual certificates to assert correctness of the minimizers of semidefinite programs. Linear programming approaches have also been studied, whose proof techniques range from using tools in statistical learning theory \cite{BahmRomConvPR} and geometric probability theory \cite{Phasemax}, along with elementary approaches using standard concentration estimates of the singular values of random matrices \cite{HV16Elem}. For a more comprehensive overview of prior work for phase retrieval, we refer the reader to \cite{Numerics_of_PR}.

In this paper, we present a proof technique for solving \eqref{amp_flow_obj} with $m = \Omega(n)$ based on uniform concentration of random matrix-valued functions that are discontinuous in space. Consider a subgradient descent algorithm with iterates $\{x_t\}_{t \geqslant 0}$ of the form $x_{t+1} = x_t - \al v_{x_t,x_*}$ where $\al > 0$, $v_{x_t,x_*} \in \partial f(x_t)$, and $\partial f(x)$ is the Clarke subdifferential at $x$ (defined in Section \ref{proofs_section}). Let $\dist(x,x_*) := \min(\|x-x_*\|,\|x+x_*\|).$ We first state our main local convergence result in the Gaussian measurement regime. 

\begin{theorem} \label{main_prob_conv_result} There exists positive absolute constants $C$, $c_1$, $c_2$, $\rho_1,$ and $\rho_2$ such that the following holds. Suppose $A \in \R^{m \times n}$ has i.i.d. $\mathcal{N}(0,1/m)$ entries and the noise is bounded $\|\eta\|\leqslant \rho_1\|x_*\|$. Assume the initial iterate $x_0$ satisfies $\dist(x_0,x_*) \leqslant \rho_2\|x_*\|$ and the step size satisfies $0 < \al \leqslant 1$. If $m \geqslant C n$, then with probability at least $1 - 3\exp(-c_1 m)- m\exp(-c_2 n)$, we have that for all $t \geqslant 1$, $$\dist(x_{t},x_*) \leqslant \left(1 - \frac{\al}{2}\right)^{t}\dist(x_0,x_*) + 4\|\eta\|.$$
\end{theorem}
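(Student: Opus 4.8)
The plan is to split the iteration into an elementary linear contraction plus a defect term, to bound the defect by measuring how far the subgradient is from the ``ideal'' direction $x-x_*$, and to reduce that bound to a single uniform concentration statement about the random, discontinuous matrix $A^{\top}D_{x,x_*}A$ in the gradient. Here $\sigma_x:=\diag(\sgn(Ax))$, $D_{x,x_*}:=\diag(\sgn(Ax)\odot\sgn(Ax_*))$, $S(x):=\{\,i:\sgn\langle a_i,x\rangle\ne\sgn\langle a_i,x_*\rangle\,\}$, and $M_x:=\E_{g\sim\N(0,I_n)}\big[\sgn\langle g,x\rangle\,\sgn\langle g,x_*\rangle\, gg^{\top}\big]$ is the population version of $A^{\top}D_{x,x_*}A$. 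A direct computation of the Clarke subdifferential gives, at any differentiability point, $\nabla f(x)=A^{\top}Ax-A^{\top}D_{x,x_*}Ax_*-A^{\top}\sigma_x\eta$; since $\nabla f(x_*)=0$ in the noiseless case, algebra turns this into
\begin{equation*}
\nabla f(x)-(x-x_*)=(A^{\top}A-I)(x-x_*)+2\sum_{i\in S(x)}\langle a_i,x_*\rangle\, a_i-A^{\top}\sigma_x\eta .
\end{equation*}
I will work on the event $\mathcal G$ on which $\|A^{\top}A-I\|_{\mathrm{op}}\le\delta$, $\|A\|_{\mathrm{op}}\le1+\delta$ (standard subgaussian matrix estimates; probability $\ge1-2e^{-c_1m}$ when $m\ge C(\delta)n$), and, crucially, $\sup_{x\in\Bcal}\|A^{\top}D_{x,x_*}A-M_x\|_{\mathrm{op}}\le\epsilon$ all hold, where $\Bcal:=\{x:\|x-x_*\|\le2\rho_2\|x_*\|\}$.

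\textbf{Defect bound.} On $\mathcal G$ and for $x\in\Bcal$, the first term above has norm $\le\delta\|x-x_*\|$ and the last has norm $\le(1+\delta)\|\eta\|$ (since $\sigma_x$ is diagonal with entries in $\{-1,0,1\}$). For the middle term, on $S(x)$ the points $x$ and $x_*$ lie on opposite sides of the hyperplane $a_i^{\perp}$, which forces $|\langle a_i,x_*\rangle|\le|\langle a_i,x-x_*\rangle|$; Cauchy--Schwarz then gives $\big\|\sum_{i\in S(x)}\langle a_i,x_*\rangle a_i\big\|\le\lambda_{\max}\big(\sum_{i\in S(x)}a_ia_i^{\top}\big)\|x-x_*\|$. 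Writing $\sum_{i\in S(x)}a_ia_i^{\top}=\tfrac12 A^{\top}(I-D_{x,x_*})A$ and decomposing $A^{\top}A-A^{\top}D_{x,x_*}A=(A^{\top}A-I)-(A^{\top}D_{x,x_*}A-M_x)+(I-M_x)$, together with the elementary estimate $\lambda_{\max}(I-M_x)\le c\,\angle(x,x_*)\le c'\rho_2$ (a two-dimensional Gaussian integral over $\mathrm{span}(x,x_*)$), bounds that eigenvalue by $\tfrac12(\delta+\epsilon+c'\rho_2)$. Hence
\begin{equation*}
\|v-(x-x_*)\|\le\kappa\|x-x_*\|+(1+\delta)\|\eta\|\qquad\text{for all }x\in\Bcal,\ v\in\partial f(x),
\end{equation*}
with $\kappa:=2\delta+\epsilon+c'\rho_2$; the inequality is stated for all $v\in\partial f(x)$ because at a non-differentiability point $\partial f(x)$ is the convex hull of limits of nearby gradients and the displayed bound is continuous and convex in $v$. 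Taking $C$ large and $\rho_2,\delta,\epsilon$ small makes $\kappa\le\tfrac12$.

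\textbf{Unrolling.} Since $f(-x)=f(x)$ and $\partial f(-x)=-\partial f(x)$, relabel the sign of $x_*$ so that $\dist(x_0,x_*)=\|x_0-x_*\|$. With $v_{x_t}\in\partial f(x_t)$ and $0<\al\le1$ one has $x_{t+1}-x_*=(1-\al)(x_t-x_*)-\al\big(v_{x_t}-(x_t-x_*)\big)$, so the defect bound and $\kappa\le\tfrac12$ yield $\|x_{t+1}-x_*\|\le(1-\al/2)\|x_t-x_*\|+\al(1+\delta)\|\eta\|$ whenever $x_t\in\Bcal$. Because $\|\eta\|\le\rho_1\|x_*\|$ with $4\rho_1\le\rho_2$, this recursion inductively keeps $\|x_t-x_*\|\le\rho_2\|x_*\|$, hence $x_t\in\Bcal$ and $\dist(x_t,x_*)=\|x_t-x_*\|$, for all $t$. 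Iterating and summing $\sum_{j\ge0}(1-\al/2)^j=2/\al$ gives $\dist(x_t,x_*)\le(1-\al/2)^t\dist(x_0,x_*)+2(1+\delta)\|\eta\|\le(1-\al/2)^t\dist(x_0,x_*)+4\|\eta\|$. Together with the per-row truncation event $\{\max_i\|a_i\|\le2\sqrt{n/m}\}$ used in the next step (probability $\ge1-me^{-c_2n}$), the failure probabilities add up to the stated $3e^{-c_1m}+me^{-c_2n}$.

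\textbf{Main obstacle.} Everything above is routine once the uniform bound $\sup_{x\in\Bcal}\|A^{\top}D_{x,x_*}A-M_x\|_{\mathrm{op}}\le\epsilon$ is available \emph{at the optimal rate} $m=\Omega(n)$; establishing it is the technical heart and the contribution highlighted in the abstract. The obstruction is that $x\mapsto A^{\top}D_{x,x_*}A$ is only piecewise constant --- it jumps whenever some $\langle a_i,x\rangle$ changes sign --- so Lipschitz-based chaining does not apply, and the natural substitutes fail: a single net with a monotone ``envelope'' that inflates the sign pattern of the net center forces the net scale down to $\Theta(1/\sqrt n)$ and loses a $\log n$, while a union bound over the $O(m^n)$ sign regions of the arrangement $\{\langle a_i,\cdot\rangle=0\}$ runs into the fact that matrix Bernstein for sums $\sum_i\pm a_ia_i^{\top}$ of heavy-tailed rank-one terms degrades at scale $m/n$. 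I would import the non-Lipschitz matrix concentration technique used for generative-network priors: first truncate the row norms (the source of the $me^{-c_2n}$ term), then control the matrix-valued empirical process through the bounded complexity of the family of sign patterns $\{S(x):x\in\Bcal\}$ rather than through continuity of the summands, using the PSD-monotonicity $\sum_{i\in S(x)}a_ia_i^{\top}\preceq\sum_{i\in S(x_0)}a_ia_i^{\top}+\sum_{i\in T(x,x_0)}a_ia_i^{\top}$ with $T(x,x_0):=\{i:\sgn\langle a_i,x\rangle\ne\sgn\langle a_i,x_0\rangle\}$ to pass from $x$ to a net center $x_0$, and balancing the net scale against the truncation level so that the deviation is a small constant once $m\ge Cn$. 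This is the step I expect to demand the most care.
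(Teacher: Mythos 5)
Your deterministic reduction is sound and is essentially the paper's argument (Lemma~\ref{convexity_Lemma} together with Theorem~\ref{conv_to_min}) with different bookkeeping: you bound $\|v-(x\pm x_*)\|$ by a fraction of $\|x\pm x_*\|$ plus a noise term and then unroll, and the pointwise fact $|\langle a_i,x_*\rangle|\le|\langle a_i,x-x_*\rangle|$ for $i\in S(x)$ is the same key observation the paper uses. Routing the middle term through $\lambda_{\max}\bigl(\sum_{i\in S(x)}a_ia_i^{\top}\bigr)=\tfrac12\lambda_{\max}\bigl(A^{\top}A-A_x^{\top}A_{x_*}\bigr)$ is a tidy repackaging of the paper's expansion of $\|(A_x-A_{x_*})x_*\|^2$, but it is not a different argument: both reduce to the same uniform concentration of $A_x^{\top}A_{x_*}$ around $\Phi_{x,x_*}$ together with concentration of $A^{\top}A$. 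The unrolling, the inductive claim that iterates stay in the ball, and the probability accounting all match the paper's Theorem~\ref{conv_to_min}.

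The genuine gap is exactly the step you flag. You note, correctly, that Lipschitz chaining fails and that a monotone envelope around a net center loses a $\log n$. But the fix you then sketch---passing from $x$ to a net center $x_0$ via $\sum_{i\in S(x)}a_ia_i^{\top}\preceq\sum_{i\in S(x_0)}a_ia_i^{\top}+\sum_{i\in T(x,x_0)}a_ia_i^{\top}$ and bounding matters through the combinatorial complexity of the sign-pattern family---is the same monotone-envelope-plus-net idea, and you have not explained how it escapes the very $\log n$ overhead you just diagnosed: controlling $\sum_{i\in T(x,x_0)}a_ia_i^{\top}$ uniformly over the relevant family of index sets still forces an $n$-dependent net resolution or an $n$-dependent union bound, and neither ``balancing the net scale against the truncation level'' nor a VC-type bound obviously removes it. The paper's mechanism is different: it sandwiches the indicator $\one_{\{t>0\}}$ between continuous ramps $\varphi^{\pm}_{\epsilon}$, shows the envelopes $G_{A,\text{up}}$ and $G_{A,\text{low}}$ are \emph{pseudo-Lipschitz} with respect to the non-spherical set system $B_{M,\epsilon^2,u}=\{z:\sum_i|\langle v_i,z\rangle|\langle v_i,u\rangle^2\le\epsilon^2 m\}$, and invokes Theorem~\ref{pseudo_lipschitz_theorem} from \cite{Daskalaskisetal2020}. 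The linear rate in $n$ comes precisely because the widths of these sets depend only on $\epsilon$, not on $n$: along directions where $\sum_i|\langle v_i,z\rangle|\langle v_i,u\rangle^2$ is small the envelope barely moves, so the effective covering scale stays $\Theta(\epsilon^2)$ rather than $\Theta(\epsilon/\sqrt n)$. To finish your argument at $m=\Omega(n)$ you would need to import this pseudo-Lipschitz set-system machinery, not a sign-pattern complexity bound.
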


\noindent This result asserts local convergence up to the noise level with optimal sample complexity. In the theorem, note that we require an initializer with relative error less than a sufficiently small constant. There are several schemes to achieve this with $m = \Omega(n)$ Gaussian measurements, even in the presence of noise \cite{ChenCandes2015, AmpFlow, Wang2018, DuchiRuan2018}. While convergence of subgradient descent without truncation for the Amplitude Flow objective is known \cite{AmpFlow}, the method of proof we present here is novel. In particular, we show that Theorem \ref{main_prob_conv_result} is a consequence of the following two results: 1) a uniform matrix concentration inequality is sufficient to guarantee local convergence with proper initialization and 2) Gaussian matrices satisfy this inequality with high probability when $m = \Omega(n)$.

We now detail the high level intuition behind the proof ideas and techniques. Let $\sgn(z) := z/|z|$ for $z \neq 0$ and $\sgn(0)=0$ act entrywise. For ease of exposition, suppose there is no noise $\eta = 0$. The discontinuous, spatially-varying measurement operator $A_x := \diag(\sgn(Ax))A$ plays a critical role in analyzing subgradient descent as this operator governs the gradient dynamics of $f$. Specifically, the gradient almost everywhere is given by $\nabla f(x) = A_x^{\T}(A_x x - A_{x_*} x_*).$ As will be shown in the next section, the gradient in expectation obeys a property equivalent to the $\RC(\mu,\lambda,\epsilon)$ in neighborhoods of the global minimizers $\pm x_*$. Hence if we establish concentration of the quantity $A_x^{\T}A_y$ to its expectation $\E[A_x^{\T}A_y]$ uniformly in $x,y$, then this property will also be satisfied by the gradient. This will be shown to guarantee local convergence up to the global sign ambiguity with high probability.


The uniform concentration result we establish is the following: when $A$ has i.i.d. $\mathcal{N}(0,1/m)$ entries, then for any parameter $0 < \epsilon < 1$, when $m = \Omega(n)$ we have that with high probability \begin{align}\|A_x^{\T}A_y - \Phi_{x,y}\| \leqslant \epsilon\ \forall x,y \in \R^n \label{AxTAyapproxPhi}\end{align} where $\Phi_{x,y} := \E[A_x^{\T}A_y]$ has an analytic expression. As this result holds uniformly in $x,y$, we have that for any $x,x_* \in \R^n$, the gradient $\nabla f(x) \approx \Phi_{x,x}x - \Phi_{x,x_*}x_*$. The difficulty of establishing \eqref{AxTAyapproxPhi} uniformly in $x,y$ is due to the fact that $A_x^{\T}A_y$ is a non-Lipschitz matrix-valued operator. Standard approaches to control these types of quantities exploit Lipschitz continuity by first 1) establishing concentration for fixed $x,y$, then 2) establishing concentration over all points in a net of the sphere by using a union bound, and finally 3) appealing to Lipschitz continuity to get concentration uniformly in $x,y$. However, in this case, 3) is not possible as $A_x^{\T}A_y$ is discontinuous with respect to $x,y$.

Fortunately, this issue can be solved by concentrating Lipschitz continuous approximations of $A_x^{\T}A_y$ with respect to $x,y$. In particular, one can create continuous matrix-valued functions that are upper and lower bounds of $A_x^{\T}A_y$ with respect to the semidefinite ordering. Then, concentration of these continuous approximations can be established by appealing to the standard arguments outlined above. This, in turn, will be shown to establish concentration of the non-Lipschitz quantity of interest by a squeezing argument. Moreover, using novel tools developed in \cite{Daskalaskisetal2020}, a more efficient set of coverings of the sphere can be exploited to achieve sample complexity linear in the ambient dimension $n$. Intuitively, this is achieved by constructing a net of the sphere that does not penalize all directions equally, but instead exploits directions for which the function of interest does not deviate much and penalizes those for which the function exhibits larger change. This intuition is made precise in the proof of Proposition \ref{MDC_prop}.

\paragraph{Discussion.} While the convergence results presented here have been shown in \cite{AmpFlow}, the contribution of this work lies in the novelty of the analysis. In particular, this work is an illustrative example of using the concentration of non-Lipschitz functions to establish favorable properties of first-order algorithms to solve inverse problems. The concentration methods of this paper have been used to establish recovery in compressive sensing \cite{HV2017, Huangetal2018}, phase retrieval \cite{HLV18, HLV20}, and other problems \cite{Heckeletal2018, Qiuetal19, JoshiHand19, CHV20} under image priors given by generative neural networks. For example, a similar uniform matrix concentration inequality to the one used in this paper was introduced by the present authors in \cite{HLV18, HLV20} to establish recovery in compressive phase retrieval under a generative prior with information-theoretically optimal sample complexity.  This paper demonstrates the applicability of these techniques to more traditional inverse problems and serves as a pedagogical introduction to those results.

\section{Proof Technique}

We now establish the sufficiency of a deterministic condition for local convergence in the form of a uniform matrix concentration inequality and show that Gaussian matrices satisfy this condition with high probability when $m = \Omega(n)$. A similar condition, known as the Weight Distribution Condition, was first introduced in \cite{HV2017} in the context of compressive sensing under generative neural network priors. The matrix concentration inequality is stated as follows. 
 \begin{definition} Fix $0 < \epsilon < 1$. We say that $A \in \R^{m \times n}$ satisfies the \textbf{Measurement Distribution Condition (MDC)} with constant $\epsilon$ if \begin{align*}
    \|A_x^{\T}A_y - \Phi_{x,y}\|\leqslant \epsilon\ \forall\ x,y \in \R^n
\end{align*} where \begin{align}
    \Phi_{x,y}:=  \begin{cases} \frac{\pi - 2\theta_{x,y}}{\pi}I_n + \frac{2\sin \theta_{x,y}}{\pi} M_{\hat{x} \leftrightarrow \hat{y}}  & \text{ if } x\neq 0, y \neq 0, \\
    0 & \text{ otherwise.}
\end{cases} \label{Phi_def} \end{align} Here $\theta_{x,y} := \angle(x,y)$, $\Hat{x}: = x/\|x\|$, $\hat{y} := y/\|y\|$,  $I_n$ is the $n \times n$ identity matrix, and $M_{\hat{x}\leftrightarrow \Hat{y}}$\footnote{A formula for this matrix is as follows: consider a rotation matrix $R$ that sends $\hat{x} \mapsto e_1$ and $\hat{y} \mapsto \cos\theta_{x,y} e_1 + \sin\theta_{x,y} e_2$ where $\theta_{x,y} = \angle(x,y)$. Then $M_{\hat{x} \leftrightarrow \hat{y}} = R^\top \left[\begin{array}{ccc}
    \cos\theta_{x,y} & \sin\theta_{x,y} & 0  \\
     \sin \theta_{x,y} & -\cos \theta_{x,y} & 0 \\
     0 & 0 & 0_{n-2}
\end{array}\right]R$ where $0_{n-2}$ is the $n-2 \times n-2$ matrix of zeros. Note that if $\theta_{x,y} = 0$ or $\pi$, $M_{\hat{x} \leftrightarrow \hat{y}} = \hat{x}\hat{x}^\top$ or $-\hat{x}\hat{x}^\top$, respectively.} is the matrix that sends $\Hat{x} \mapsto \Hat{y}$, $\Hat{y} \mapsto \Hat{x}$, and $z \mapsto 0$ for any $z \in \text{span}(\{x,y\})^{\perp}.$
\end{definition} 

\noindent Note that for points $x,y$ with small angle, this condition requires $A_x^{\T}A_y$ to act like an isometry. In the extreme case when $x = y$, $\Phi_{x,y}$ is the identity. An elementary calculation gives $\E[A_x^{\T}A_y] = \Phi_{x,y}$ for $x,y \neq 0$ and $A_{ij} \sim \N(0,1/m)$. 

The first result is that the MDC is sufficient to guarantee the following: a subgradient descent algorithm with proper initialization will converge to the true solution up to the global sign ambiguity. To establish this, we first show that the MDC is sufficient for the objective to satisfy the following regularity condition which states that, within a neighborhood of the true solution, all subgradients point towards the true solution. This result is proven in Section \ref{proofs_section_conv}. \begin{lemma} \label{convexity_Lemma} Fix $0 < \epsilon \leqslant 0.001$. Suppose $A \in \R^{m \times n}$ satisfies the MDC with constant $\epsilon$. Then for all $x \in \R^n$ such that $\dist(x,x_*) \leqslant \epsilon\|x_*\|$ and any $v_{x,x_*} \in \partial f(x)$, we have that $\| v_{x,x_*} -  (x \pm x_*) \| \leqslant \frac{1}{2}\|x\pm x_*\| + 2\|\eta\|.$ Here $x \pm x_* := x-x_*$ if $\|x-x_*\| = \dist(x,x_*)$ and $x+x_*$ otherwise.
\end{lemma}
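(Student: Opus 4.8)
The plan is to work directly with an explicit formula for the Clarke subgradient and reduce each resulting term to the MDC. First I would record that, writing $A_x := \diag(\sgn(Ax))A$ (with $\sgn(0)=0$), $a_i^{\T}$ for the $i$-th row of $A$, and $Z := \{i : (Ax)_i = 0\}$, every $v_{x,x_*} \in \partial f(x)$ can be written as $v_{x,x_*} = A_x^{\T}(A_x x - y) + \sum_{i \in Z} s_i a_i$ with $|s_i| \leqslant |y_i|$. Since $A_x x = |Ax|$ and $y = |Ax_*| + \eta = A_{x_*}x_* + \eta$, and since $|A(-x_*)| = |Ax_*|$ with $A_{-x_*}(-x_*) = A_{x_*}x_*$, replacing $x_*$ by $-x_*$ if necessary reduces the claim to the case $h := x - x_*$ with $\|h\| = \dist(x,x_*) \leqslant \epsilon\|x_*\|$ (which forces $x,x_* \neq 0$ and $\sin\angle(x,x_*) \leqslant \|h\|/\|x_*\| \leqslant \epsilon$). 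Adding and subtracting $A_x^{\T}A_x x_*$ and using $\Phi_{x,x} = I_n$, I would then split
\[
v_{x,x_*} - h \;=\; (A_x^{\T}A_x - I_n)\,h \;+\; A_x^{\T}(A_x - A_{x_*})x_* \;-\; A_x^{\T}\eta \;+\; \sum_{i\in Z}s_i a_i ,
\]
and bound the four pieces separately. Note that the inequality must hold even when $\|h\| = 0$, so every piece must be controlled by a small multiple of $\|h\|$ plus a small multiple of $\|\eta\|$.

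For the auxiliary estimates, I would first note $\|A\| \leqslant \sqrt{1+\epsilon}$: indeed $\|Av\| = \|A_v v\| \leqslant \|A_v\|\|v\|$ and $\|A_v^{\T}A_v\| \leqslant 1+\epsilon$ by the MDC (as $\Phi_{v,v} = I_n$). Hence $A^{\T}A \preceq (1+\epsilon)I_n$; and since $A^{\T}A = A_x^{\T}A_x + A^{\T}\diag(\one_Z)A \succeq A_x^{\T}A_x \succeq (1-\epsilon)I_n$, we get $\|A^{\T}A - I_n\| \leqslant \epsilon$ and therefore $\|A^{\T}\diag(\one_Z)A\| = \|A^{\T}A - A_x^{\T}A_x\| \leqslant 2\epsilon$. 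These give $\|(A_x^{\T}A_x - I_n)h\| \leqslant \epsilon\|h\|$ and $\|A_x^{\T}\eta\| \leqslant \sqrt{1+\epsilon}\,\|\eta\|$ at once, and for the last piece, since $a_i^{\T}x = 0$ for $i\in Z$ gives $|y_i| = \big||a_i^{\T}x_*| + \eta_i\big| \leqslant |a_i^{\T}h| + |\eta_i|$, collecting coefficients yields $\big\|\sum_{i\in Z}s_i a_i\big\| \leqslant \|A^{\T}\diag(\one_Z)A\|^{1/2}\big(\|A^{\T}\diag(\one_Z)A\|^{1/2}\|h\| + \|\eta\|\big) \leqslant 2\epsilon\|h\| + \sqrt{2\epsilon}\,\|\eta\|$.

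The main obstacle is the cross term $A_x^{\T}(A_x - A_{x_*})x_*$. The naive route fails: estimating $A_x^{\T}A_x x_* - A_x^{\T}A_{x_*}x_*$ by $\|(I_n - \Phi_{x,x_*})x_*\| + 2\epsilon\|x_*\|$ via two applications of the MDC leaves an error $2\epsilon\|x_*\|$ that does not vanish with $\dist(x,x_*)$, and so is useless when $\|h\|$ is small. Instead I would keep $A_x - A_{x_*}$ together and localize it. The $i$-th entry of $(A_x - A_{x_*})x_*$ is $(\sgn(a_i^{\T}x) - \sgn(a_i^{\T}x_*))(a_i^{\T}x_*)$, which vanishes unless $i \in S \cup Z$ with $S := \{i : \sgn(a_i^{\T}x) \neq \sgn(a_i^{\T}x_*),\ a_i^{\T}x\neq 0\}$, and on $S\cup Z$ has magnitude at most $2|a_i^{\T}h|$ (on $S$ the signs of $a_i^{\T}x$ and $a_i^{\T}x_*$ are opposite, forcing $|a_i^{\T}x_*| \leqslant |a_i^{\T}x_* - a_i^{\T}x|$; on $Z$, $a_i^{\T}x = 0$). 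Hence $\|(A_x - A_{x_*})x_*\|^2 \leqslant 4\,h^{\T}A^{\T}\diag(\one_{S\cup Z})A\,h$; writing $D := \diag(\sgn(Ax)) - \diag(\sgn(Ax_*))$ so that $A_x - A_{x_*} = DA$, a componentwise check of diagonals gives $\diag(\one_{S\cup Z}) \preceq \tfrac14 D^2 + \diag(\one_Z)$, hence $A^{\T}\diag(\one_{S\cup Z})A \preceq \tfrac14(A_x - A_{x_*})^{\T}(A_x - A_{x_*}) + A^{\T}\diag(\one_Z)A$. Expanding $(A_x - A_{x_*})^{\T}(A_x - A_{x_*}) = A_x^{\T}A_x - A_x^{\T}A_{x_*} - A_{x_*}^{\T}A_x + A_{x_*}^{\T}A_{x_*}$ and applying the MDC four times (with $\Phi_{x,x} = \Phi_{x_*,x_*} = I_n$, $\Phi_{x,x_*} = \Phi_{x_*,x}$, and using that transposition preserves the operator norm) yields $(A_x - A_{x_*})^{\T}(A_x - A_{x_*}) \preceq 2(I_n - \Phi_{x,x_*}) + 4\epsilon I_n$. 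Together with $\|A^{\T}\diag(\one_Z)A\| \leqslant 2\epsilon$ and the elementary computation $\|I_n - \Phi_{x,x_*}\| = \tfrac{2}{\pi}(\theta + \sin\theta) \leqslant \tfrac{4\theta}{\pi}$ with $\theta := \angle(x,x_*)$ (since $M_{\hat{x}\leftrightarrow\hat{x}_*}$ has eigenvalues $\pm 1$ and $0$), this gives $\|(A_x - A_{x_*})x_*\| \leqslant 2\sqrt{\tfrac{2\theta}{\pi} + 3\epsilon}\,\|h\|$, hence $\|A_x^{\T}(A_x - A_{x_*})x_*\| \leqslant 2\sqrt{1+\epsilon}\,\sqrt{\tfrac{2\theta}{\pi} + 3\epsilon}\,\|h\|$.

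Assembling the four bounds gives $\|v_{x,x_*} - h\| \leqslant \big(3\epsilon + 2\sqrt{1+\epsilon}\sqrt{\tfrac{2\theta}{\pi} + 3\epsilon}\big)\|h\| + \big(\sqrt{1+\epsilon} + \sqrt{2\epsilon}\big)\|\eta\|$. Since $\theta \leqslant \arcsin\epsilon$ and $\epsilon \leqslant 0.001$, the coefficient of $\|h\|$ is comfortably below $\tfrac12$ and the coefficient of $\|\eta\|$ is below $2$, which is the claimed inequality; the case $\dist(x,x_*) = \|x + x_*\|$ follows by applying this with $-x_*$ in place of $x_*$. The one genuinely delicate step is the cross term: one must not bound $A_x^{\T}A_x x_*$ and $A_x^{\T}A_{x_*}x_*$ against the MDC individually, but instead keep their difference together, localize it to the small sign-disagreement set $S$ (plus the zero set $Z$), and control the restricted Gram matrix of $A$ there by $\|I_n - \Phi_{x,x_*}\| = O(\theta)$, again via the MDC.
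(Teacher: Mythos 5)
Your proof follows essentially the same plan as the paper's: the same additive decomposition of $v_{x,x_*} - h$ into $(A_x^\top A_x - I_n)h$, a cross term $A_x^\top(A_x - A_{x_*})x_*$, and noise, and the same key observation that sign disagreement between $a_i^\top x$ and $a_i^\top x_*$ forces $|a_i^\top x_*| \leqslant |a_i^\top h|$ so that $\|(A_x - A_{x_*})x_*\|$ is controlled by $\|(A_x - A_{x_*})h\|$, which the MDC then makes $O(\sqrt{\theta + \epsilon})\|h\|$. The two genuine differences are cosmetic or of equal validity. First, you handle nonsmoothness directly via an explicit Clarke-subgradient formula with the junk term $\sum_{i\in Z}s_i a_i$, bounded by deducing $\|A^\top\operatorname{diag}(\one_Z)A\|\leqslant 2\epsilon$ from the MDC; the paper instead writes the subgradient as a convex combination of limits of gradients of the adjoining quadratic pieces and passes the differentiable-point bound through the limit. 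Both work; yours has the minor advantage of avoiding the limit argument. Second, for the cross term you detour through the index set $S\cup Z$ and a psd inequality, whereas the paper immediately writes $\|(A_x - A_{x_*})x_*\|^2 \leqslant \sum_i(\sgn\langle a_i,x\rangle - \sgn\langle a_i,x_*\rangle)^2\langle a_i,h\rangle^2 = \|(A_x - A_{x_*})h\|^2$ and expands the Gram matrix; the paper's version is cleaner and sidesteps the following slip.

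The one place your write-up is actually incorrect as stated is the claimed psd domination $\operatorname{diag}(\one_{S\cup Z}) \preceq \tfrac14 D^2 + \operatorname{diag}(\one_Z)$. With your definition $S=\{i:\sgn(a_i^\top x)\neq\sgn(a_i^\top x_*),\ a_i^\top x\neq 0\}$, an index with $a_i^\top x\neq 0$ and $a_i^\top x_*=0$ lies in $S$ (so the left side is $1$) but has $D_{ii}^2 = 1$ and $i\notin Z$ (so the right side is $1/4$), and the inequality fails. This is easily repaired: such indices contribute nothing to $\|(A_x-A_{x_*})x_*\|^2$, so you may replace $S$ by $S':=\{i:\sgn(a_i^\top x)\cdot\sgn(a_i^\top x_*)=-1\}$ throughout; then $\|(A_x-A_{x_*})x_*\|^2 \leqslant 4\,h^\top A^\top\operatorname{diag}(\one_{S'\cup Z})A\,h$ still holds, the domination $\operatorname{diag}(\one_{S'\cup Z})\preceq\tfrac14 D^2 + \operatorname{diag}(\one_Z)$ becomes true componentwise, and the rest of your estimate goes through unchanged with the stated constants. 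Alternatively, adopting the paper's direct bound $\|(A_x-A_{x_*})x_*\|^2\leqslant\|(A_x-A_{x_*})h\|^2$ avoids introducing $S$ at all.
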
 \noindent Note that the conclusion of this lemma in the noiseless setting is in fact equivalent to the  $\RC(\mu,\lambda,\epsilon)$ condition\footnote{Indeed, note that if the conditions of Lemma \ref{convexity_Lemma} are satisfied and $\eta = 0$, then for all $x \in \R^n$ such that $\dist(x,x_*)\leqslant \epsilon\|x_*\|$ and any $v_{x,x_*} \in \partial f(x)$, 
    $\|v_{x,x_*}-(x\pm x_*)\| \leqslant \frac{1}{2}\|x\pm x_*\| \Longleftrightarrow 
    \langle v_{x,x_*},x\pm x_*\rangle  \geqslant \frac{3}{8}\|x\pm x_*\|^2 + \frac{1}{2}\|v_{x,x_*}\|^2$. Thus the MDC is sufficient to guarantee the $\RC(\mu,\lambda,\epsilon)$ holds with $\mu = 3/4$, $\lambda =1$, and our choice of $\epsilon$.}. We now show that satisfaction of the MDC implies local convergence.

\begin{theorem}[Deterministic Local Convergence Guarantee] \label{conv_to_min} 
Fix $0 < \epsilon \leqslant 0.001$. Suppose $A \in \R^{m \times n}$ satisfies the MDC with constant $\epsilon$, $\|\eta\| \leqslant \frac{\epsilon}{4}\|x_*\|$, and $0 < \al \leqslant 1$. If  $\dist(x_0,x_*)\leqslant\epsilon\|x_*\|$ then for all $t \geqslant 1$, $$\dist(x_{t},x_*) \leqslant \left(1 - \frac{\al}{2}\right)^{t}\dist(x_0,x_*) + 4\|\eta\|.$$
\end{theorem}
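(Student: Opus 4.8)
The plan is to run an induction on $t$ with Lemma~\ref{convexity_Lemma} doing the heavy lifting: the nontrivial content is already packed into that lemma, so what remains is a contraction estimate together with the book-keeping needed to keep the iterates inside the region $\{x : \dist(x,x_*) \le \epsilon\|x_*\|\}$ where the lemma can be reapplied. Write $d_t := \dist(x_t,x_*)$ and, for each $t$, fix a sign $\sigma_t \in \{\pm 1\}$ with $\|x_t - \sigma_t x_*\| = d_t$; in the notation of Lemma~\ref{convexity_Lemma} this is exactly the choice $x_t \pm x_* = x_t - \sigma_t x_*$, and $\|x_t \pm x_*\| = d_t$.

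I would first prove, by induction on $t$, that $d_t \le \epsilon\|x_*\|$ for all $t \ge 0$, extracting the one-step contraction along the way. The base case $t=0$ is the hypothesis on $x_0$. For the inductive step, assume $d_t \le \epsilon\|x_*\|$, so Lemma~\ref{convexity_Lemma} applies at $x_t$ and gives $\|v_{x_t,x_*} - (x_t - \sigma_t x_*)\| \le \tfrac12 d_t + 2\|\eta\|$ for the subgradient $v_{x_t,x_*} \in \partial f(x_t)$ used by the algorithm. Decomposing the update as
\begin{align*}
x_{t+1} - \sigma_t x_* \;=\; (1-\alpha)\,(x_t - \sigma_t x_*) + \alpha\bigl[(x_t - \sigma_t x_*) - v_{x_t,x_*}\bigr],
\end{align*}
and using $0 < \alpha \le 1$ together with the triangle inequality and the lemma, we obtain
\begin{align*}
d_{t+1} \;\le\; \|x_{t+1} - \sigma_t x_*\| \;\le\; (1-\alpha)\,d_t + \alpha\Bigl(\tfrac12 d_t + 2\|\eta\|\Bigr) \;=\; \Bigl(1 - \tfrac{\alpha}{2}\Bigr) d_t + 2\alpha\|\eta\|.
\end{align*}
Substituting $d_t \le \epsilon\|x_*\|$ and the noise bound $\|\eta\| \le \tfrac{\epsilon}{4}\|x_*\|$ yields $d_{t+1} \le (1-\tfrac{\alpha}{2})\epsilon\|x_*\| + \tfrac{\alpha\epsilon}{2}\|x_*\| = \epsilon\|x_*\|$, which closes the induction.

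With the invariant in hand, the recursion $d_{t+1} \le (1-\tfrac{\alpha}{2})d_t + 2\alpha\|\eta\|$ holds for every $t \ge 0$; unrolling it and summing the geometric series (legitimate since $0 \le 1-\tfrac{\alpha}{2} < 1$) gives
\begin{align*}
d_t \;\le\; \Bigl(1-\tfrac{\alpha}{2}\Bigr)^{t} d_0 + 2\alpha\|\eta\|\sum_{k=0}^{t-1}\Bigl(1-\tfrac{\alpha}{2}\Bigr)^{k} \;\le\; \Bigl(1-\tfrac{\alpha}{2}\Bigr)^{t} d_0 + 2\alpha\|\eta\|\cdot\frac{2}{\alpha} \;=\; \Bigl(1-\tfrac{\alpha}{2}\Bigr)^{t} d_0 + 4\|\eta\|,
\end{align*}
which is exactly the asserted bound for all $t \ge 1$ (indeed for all $t \ge 0$).

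As for where care is needed: essentially all the difficulty lives in Lemma~\ref{convexity_Lemma}, and the theorem itself is a routine contraction argument. The two points worth flagging are (i) keeping the factor $\alpha$ in the noise term throughout — writing it as $2\alpha\|\eta\|$ rather than collapsing it to $2\|\eta\|$ prematurely — which is precisely what makes the geometric series telescope to $4\|\eta\|$ instead of $4\|\eta\|/\alpha$; and (ii) that the hypothesis $\|\eta\| \le \tfrac{\epsilon}{4}\|x_*\|$ is used only to preserve the invariant $d_t \le \epsilon\|x_*\|$, without which Lemma~\ref{convexity_Lemma} could not be invoked at the next step. The global sign ambiguity causes no trouble, since at each step we work with the sign $\sigma_t$ realizing the distance at time $t$ and then pass back via $\dist(x_{t+1},x_*) \le \|x_{t+1} - \sigma_t x_*\|$.
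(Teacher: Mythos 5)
Your proof is correct and follows essentially the same route as the paper's: invoke Lemma~\ref{convexity_Lemma} for a one-step contraction $d_{t+1}\le(1-\tfrac{\alpha}{2})d_t+2\alpha\|\eta\|$, use the noise bound to show the iterates remain in the $\epsilon\|x_*\|$-ball so the lemma can be reapplied, and unroll the geometric recursion to obtain the $4\|\eta\|$ floor. The only (minor, stylistic) difference is that you carry a step-dependent sign $\sigma_t$ and pass through $\dist(x_{t+1},x_*)\le\|x_{t+1}-\sigma_t x_*\|$, whereas the paper fixes the sign once at $t=0$ by symmetry and shows it never flips; both are valid and equivalent here.
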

\begin{proof}[Proof of Theorem \ref{conv_to_min}] Let $\Bcal(x_*,r) := \{x \in \R^n : \|x-x_*\|\leqslant r\}$. 
Suppose $x_0 \in \mathcal{B}(x_*,\epsilon\|x_*\|)$ as the proof for the case $x_0 \in \mathcal{B}(-x_*,\epsilon\|x_*\|)$ is identical. For $t \geqslant 1$, observe that for any $v_{x_{t-1},x_*} \in \partial f(x_{t-1})$, we have \begin{align}
    \|x_{t} - x_*\| & = \|x_{t-1} - \al v_{x_{t-1},x_*} + \al(x_{t-1}-x_*) - \al(x_{t-1}-x_*) - x_*\| \nonumber\\
    & \leqslant \left(1-\al\right)\|x_{t-1}-x_*\| + \al\|v_{x_{t-1},x_*} - (x_{t-1}-x_*)\| \nonumber\\
    & \leqslant \left(1-\al\right)\|x_{t-1}-x_*\| + \frac{\al}{2}\|x_{t-1}-x_*\| + 2\al\|\eta\| \nonumber\\
    & = \left(1 - \frac{\al}{2}\right)\|x_{t-1}-x_*\| + 2\al\|\eta\| \label{iterate_bound}
\end{align} where in the third line we used Lemma \ref{convexity_Lemma}. We claim that the iterates must stay within a ball of the minimizer. Indeed, if $x_{t-1} \in \Bcal(x_*,\epsilon\|x_*\|)$, we have that by equation \eqref{iterate_bound} and our bound on the size of the noise $\|\eta\| \leqslant \frac{\epsilon}{4}\|x_*\|$ that \begin{align*}
    \|x_t-x_*\| & \leqslant \left(1 - \frac{\al}{2}\right)\|x_{t-1}-x_*\| + 2\al\|\eta\| \leqslant \left(1 - \frac{\al}{2}\right)\epsilon \|x_*\| + \al\cdot\frac{\epsilon}{2}\|x_*\| = \epsilon\|x_*\|
\end{align*} so $x_t \in \Bcal(x_*,\epsilon\|x_*\|)$. Thus, we can invoke Lemma \ref{convexity_Lemma} and equation \eqref{iterate_bound} for each $t \geqslant 1$. Letting $\tau := 1-\frac{\al}{2}$, starting at $t = 1$ and repeatedly applying \eqref{iterate_bound}, we attain \begin{align*}
    \|x_{t} - x_*\|  \leqslant \tau^t\|x_{0}-x_*\| + 2\al(\tau^t + \tau^{t-1} + \dots + 1)\|\eta\| \leqslant \tau^t\|x_{0}-x_*\| + \frac{2\al}{1-\tau}\|\eta\|.
\end{align*} Plugging in the definition of $\tau$ yields the desired inequality.
\end{proof}

Finally, using recent tools developed in \cite{Daskalaskisetal2020}, we show that Gaussian matrices satisfy the MDC with high probability with $m = \Omega(n)$ sample complexity.

\begin{proposition} \label{MDC_prop} Fix $0 < \epsilon < 1$.
Suppose $A \in \R^{m \times n}$ has i.i.d. $\mathcal{N}(0,1/m)$ entries. If $m \geqslant C_{\epsilon} n$, then $A$ satisfies the MDC with constant $\epsilon$ with probability at least $1 - \exp(-cm\epsilon^2/2)- m\exp(-n/8) - \exp(-m/2)$. Here $C_{\epsilon} = \Omega(\epsilon^{-2}\log(\epsilon^{-1}))$ and $c$ is a universal constant.
\end{proposition}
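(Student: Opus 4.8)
The plan is to prove $\|A_x^\top A_y - \Phi_{x,y}\| \le \epsilon$ for all $x,y$ by reducing to the sphere and then combining pointwise concentration with a semidefinite squeezing argument that replaces the discontinuous operator $A_x^\top A_y$ by Lipschitz surrogates. Since $A_x = \diag(\sgn(Ax))A$ and $\Phi_{x,y}$ are unchanged under positive rescaling of $x$ and of $y$, it suffices to take $x,y \in S^{n-1}$. Writing $a_i^\top$ for the $i$-th row of $A$, we have $A_x^\top A_y = \sum_{i=1}^m \sgn(a_i^\top x)\,\sgn(a_i^\top y)\, a_i a_i^\top$, a sum of independent rank-one matrices with mean $\Phi_{x,y}$ that are, as functions of $(x,y)$, discontinuous. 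First I would pass to the good event $E$ on which $\|A\| \le 2$ (which fails with probability at most $e^{-m/2}$ by the standard Gaussian operator-norm tail) and $\|a_i\|^2 \le 2n/m$ for every $i$ (which fails with probability at most $m\,e^{-n/8}$ by a $\chi^2$ tail bound together with a union bound over $i$); these two events supply the last two terms of the claimed failure probability and render all matrices in play bounded by $O(1)$ or $O(n/m)$ as appropriate.

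For a fixed pair $x,y \in S^{n-1}$, testing $A_x^\top A_y - \Phi_{x,y}$ against a fixed unit vector $u$ reduces the operator norm to a sum of independent scalars $\sgn(a_i^\top x)\sgn(a_i^\top y)(a_i^\top u)^2$, each subexponential with parameter $O(1/m)$, so Bernstein's inequality followed by a $\tfrac14$-net over test vectors gives $\|A_x^\top A_y - \Phi_{x,y}\| \le \epsilon/2$ with probability $1-\exp(-\Omega(m\epsilon^2)+O(n))$, which is favorable once $m = \Omega(\epsilon^{-2}n)$. The crux is upgrading this to a bound uniform in $(x,y)$, where Lipschitz continuity is unavailable. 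For a resolution $\delta$ and smoothing width $\gamma$, I would sandwich each sign product in the semidefinite order: expanding $\sgn(s)\sgn(t) = 4\,\one[s>0]\one[t>0] - 2\,\one[s>0] - 2\,\one[t>0] + 1$ and replacing each indicator $\one[a^\top x'>0]$ by a clipped ramp $\psi_\gamma$ applied to a margin-shifted argument $a^\top x \mp (\sin\delta+\gamma)\|a\|$ — over- or under-estimating each monomial according to the sign of its coefficient — yields matrix-valued functions with $W^-_{x,y}\preceq W^+_{x,y}$ that are $O(\gamma^{-1})$-Lipschitz in $(x,y)$ on $E$ and satisfy $W^-_{x,y} \preceq A_{x'}^\top A_{y'} \preceq W^+_{x,y}$ for all $(x',y')$ within angle $\delta$ of $(x,y)$. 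Then I would (i) concentrate $W^\pm$ around their means over a net of the sphere by the same test-vector Bernstein argument; (ii) verify $\|\E W^+_{x,y} - \E W^-_{x,y}\| = O(\delta+\gamma)$, since the difference is built from expectations of slab sums $\sum_i \one[\,|a_i^\top x| \lesssim (\delta+\gamma)\|a_i\|\,]\,a_ia_i^\top$, each slab carrying Gaussian mass $O(\delta+\gamma)$; and (iii) check by elementary trigonometry that $\|\E W^\pm_{x,y} - \Phi_{x,y}\| = O(\delta+\gamma)$. Taking $\gamma,\delta = \Theta(\epsilon)$ and combining with the fixed-pair step closes the squeeze: on $E$, every $(x',y')$ with net representative $(x,y)$ has $A_{x'}^\top A_{y'}$ trapped between $W^\pm_{x,y}$, which in turn lie within $O(\epsilon)$ of $\Phi_{x,y} \approx \Phi_{x',y'}$.

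I expect the main obstacle to be carrying out steps (i)--(ii) at sample complexity \emph{linear} in $n$, which is exactly where the non-Lipschitz structure bites. The operator norms of the random slab matrices must be controlled uniformly over the covering, and the crude estimate (number of rows in the slab) $\times$ (maximum row norm) $\approx m\epsilon \cdot n/m = \epsilon n$ is far too lossy, while refining the net until a worst-case argument succeeds reintroduces a $\log n$ factor in $m$. I would resolve this with the adaptive multi-scale covering/chaining machinery of \cite{Daskalaskisetal2020}: instead of a single-scale $\epsilon$-net that treats all directions of the sphere equally, one builds a net that refines only along the directions in which the surrogates actually vary, which produces the failure term $\exp(-cm\epsilon^2/2)$ with $C_\epsilon = \Omega(\epsilon^{-2}\log\epsilon^{-1})$ and no extra $n$-dependence. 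The remaining ingredients are routine: the Gaussian operator-norm and $\chi^2$ tail bounds for $E$, scalar Bernstein for the fixed pairs, and the trigonometric evaluation of the surrogate expectations against $\Phi_{x,y}$.
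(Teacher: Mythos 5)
Your proposal is correct and reaches the conclusion by a genuinely different route from the paper. The paper splits $\sgn(b)=\one_{\{b>0\}}-\one_{\{b<0\}}$ and hence writes $A_x^\T A_y = A_{+,x}^\T A_{+,y}+A_{-,x}^\T A_{-,y}-A_{+,x}^\T A_{-,y}-A_{-,x}^\T A_{+,y}$, cites the Daskalakis et al.\ result (Theorem 3.2 there, Lemma~\ref{plus_plus_conc_result} here) for the first two same-sign blocks, and then devotes a new extension argument (Lemma~\ref{Daskalaskis_extension}, built on the surrogates $G_{A,\mathrm{up}},G_{A,\mathrm{low}}$ and the pseudo-Lipschitz concentration theorem) to handle the mixed blocks $A_{+,x}^\T A_{-,y}$ and $A_{-,x}^\T A_{+,y}$. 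Your identity $\sgn(s)\sgn(t)=4\one_{\{s>0\}}\one_{\{t>0\}}-2\one_{\{s>0\}}-2\one_{\{t>0\}}+1$ instead expresses the operator as
\[
A_x^\T A_y \;=\; 4\,A_{+,x}^\T A_{+,y}\;-\;2\,A_{+,x}^\T A_{+,x}\;-\;2\,A_{+,y}^\T A_{+,y}\;+\;A^\T A,
\]
and since $\Phi_{x,y}=4Q_{x,y}-I_n$ (using $Q_{x,x}=\tfrac12 I_n$ and $\E[A^\T A]=I_n$), this reduces the MDC entirely to (i) one application of the bilinear concentration result for $A_{+,x}^\T A_{+,y}$, (ii) its diagonal $x=y$ specialization, and (iii) the standard Gaussian Gram bound $\|A^\T A-I_n\|\leqslant\epsilon$ -- no mixed-sign extension lemma is needed. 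You actually do not fully exploit your own identity here: you propose to re-derive sandwich surrogates for the whole product, when the identity lets you cite the existing uniform bound for $A_{+,x}^\T A_{+,y}$ almost verbatim. The other place you diverge from the paper is the shape of the surrogates. The paper's $G_{A,\mathrm{up}},G_{A,\mathrm{low}}$ use ramp relaxations of $\one_{\{t>0\}}$ and $\one_{\{t<0\}}$ with no margin shift, so the squeeze $G_{A,\mathrm{low}}\preceq A_{+,x}^\T A_{-,y}\preceq G_{A,\mathrm{up}}$ holds globally in $(x,y)$, and the uniformity over $(x,y)$ is delegated to the abstract pseudo-Lipschitz concentration theorem (Theorem~\ref{pseudo_lipschitz_theorem}), which constructs the covering internally via a $(\delta,\gamma)$-wide set system. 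Your $W^\pm_{x,y}$ carry an explicit margin shift $(\sin\delta+\gamma)\|a\|$ tied to the covering resolution, so the squeeze holds only over the $\delta$-ball around each net point and you run a classical net argument over $(x,y,u)$. Both work; your version is more elementary and self-contained, while the paper's is more modular (it can quote the same machinery the authors already proved for the WDC). One small caveat: your worry that a worst-case slab-mass estimate of order $\epsilon n$ forces a $\log n$ penalty is resolved not by counting slab rows deterministically but by concentrating the random quadratic form $\sum_{i\in\text{slab}}\langle a_i,u\rangle^2$ to its $O(\epsilon)$ mean at each net point via subexponential Bernstein; once you do that, a plain Euclidean $\Theta(\epsilon)$-net on $(x,y,u)$ already suffices and the pseudo-Lipschitz / adaptive covering step is optional overhead in your scheme (though invoking it is certainly safe and matches the paper's bookkeeping).
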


\noindent Combining this result with Theorem \ref{conv_to_min} with $ \epsilon = 0.001$ proves Theorem \ref{main_prob_conv_result}.

Hence this shows that the MDC is sufficient for local convergence of subgradient descent with proper initialization. Moreover, the conclusion holds for generic measurements with high probability as soon as $m = \Omega(n)$. We emphasize that the MDC is a global property concerning the measurement matrix. Hence Proposition \ref{MDC_prop} implies one has uniform concentration of subgradients to their expectation with optimal sample complexity. Extending this local convergence result to a result about convergence of subgradient descent with generic initialization is an interesting future direction, as shown in recent works \cite{Chenetal19, TanVershynin19}. 

\section{Proofs} \label{proofs_section}

In this section, we prove Lemma \ref{convexity_Lemma} and Proposition \ref{MDC_prop}. We first introduce some notation used in the proofs. Let $[n] := \{1,\dots,n\}$. Let $\mathcal{B}(y,r) :=\{x \in \R^n : \|x-y\|\leqslant r\}$ and $\Bcal:=\{x \in \R^n : \|x\|\leqslant 1\}$. For $x \in \R^n \setminus\{0\}$, let $\hat{x} := x/\|x\|$. Let $\one_{\{E\}}$ be the indicator function on the event $E$. For a random variable $X$, let $X|(E)$ be the random variable $X$ conditioned on the event $E$. Let $I_n$ be the $n \times n$ identity matrix. Let $\mathcal{S}^{n-1}$ denote the unit sphere in $\R^n$. We write $\gamma = \Omega(\delta)$ when $\gamma \geqslant C\delta$ for some positive constant $C$. Similarly, we write $\gamma = O(\delta)$ when $\gamma \leqslant C \delta$ for some positive constant $C$. 

For a locally Lipschitz function $f : \mathcal{X} \rightarrow \R$ from a Hilbert space $\mathcal{X}$ to $\R$, the Clarke generalized directional derivative  \cite{Clason2017} of $f$ at $x \in \mathcal{X}$ in the direction $u$ is defined by 
    $$f^o(x;u) := \limsup_{y \rightarrow x, t \downarrow 0} \frac{f(y+tu) - f(y)}{t}.$$ Then the generalized subdifferential of $f$ at $x$ is defined as $$\partial f(x) := \{v \in \R^n : \langle v, u \rangle \leqslant f^o(x;u),\ \forall u \in \mathcal{X}\}.$$
 Any $v_{x,x_*} \in \partial f(x)$ is called a subgradient of $f$ at $x$. When $f$ is differentiable at $x$, $\partial f(x) = \{\nabla f(x)\}$. In the proofs, we will make use of the following fact concerning the Clarke subdifferential of the objective function $f$. Since $f$ is piecewise quadratic, Theorem 9.6 from \cite{Clason2017} asserts that for any $x \in \R^n$, $\partial f(x)$ can be written equivalently as \begin{align}
    \partial f(x) = \text{conv}(v_1,v_2,\dots,v_{s})  \label{subdifferential_definition}
\end{align} where $\text{conv}(\cdot)$ denotes the convex hull of $v_1,\dots,v_s$, $s$ is the number of quadratic functions adjoint to $x$, and $v_{\ell}$ is the gradient of the $\ell$-th quadratic function of $f$ at $x$. For each $v_{\ell}$, there exists a $w_{\ell}$ and a sufficiently small $\delta_{\ell} > 0$ such that $f$ is differentiable at $x + \delta_{\ell} w_{\ell}$ and $v_{\ell} = \lim_{\delta_{\ell} \downarrow 0} \nabla f(x + \delta_{\ell}w_{\ell})$.

\subsection{Convexity property of objective} \label{proofs_section_conv}

Here we prove Lemma \ref{convexity_Lemma}, the convexity-like property around the minimizer. In essence, it states that when iterates are near the minimizers, all subgradients point towards the true solution.

\begin{proof}[Proof of Lemma \ref{convexity_Lemma}] We consider the case $x \in \mathcal{B}(x_*,\epsilon\|x_*\|)$ as the case $x \in \mathcal{B}(-x_*,\epsilon\|x_*\|)$ is similar. Suppose $f$ is differentiable at $x$. First, note the MDC implies that $\|A_x^{\T}A_x-I_n\|\leqslant \epsilon$. Moreover, for any $x,z \in \R^n$, $\|A_xz\|^2\leqslant |\langle A_x^{\T}A_xz,z\rangle - \|z\|^2| + \|z\|^2 \leqslant (1+\epsilon)\|z\|^2$. Hence $\|A_x\|\leqslant2$ for all $x \in \R^n$ when $\epsilon < 1$. Thus, we have \begin{align}
    \|v_{x,x_*} - (x-x_*)\| & \leqslant \|A_{x}^{\T}(A_x - A_{x_*})x_*\| +\|A_{x}^{\T}A_x(x - x_*) - (x-x_*)\| + \|A_{x}^{\T}\eta\| \nonumber\\
    & \leqslant 2\|(A_x - A_{x_*})x_*\| + \epsilon\|x-x_*\| + 2\|\eta\|. \label{v_xx*_to_x_x*_firstbound}
\end{align}

We now show that for sufficiently small $\epsilon$, $\|(A_x - A_{x_*})x_*\|\leqslant 1/8\|x-x_*\|$. Letting $\{a_i\}_{i=1}^m$ denote the rows of $A$, observe that \begin{align*}
    \left\|(A_{x} - A_{x_*})x_*\right\|^2
    & = \sum_{i=1}^m\left(\sgn(\langle a_i, x\rangle) - \sgn(\langle a_i , x_*\rangle )\right)^2\langle a_i, x_*\rangle^2 \\
    & \leqslant \sum_{i=1}^m\left(\sgn(\langle a_i, x\rangle) - \sgn(\langle a_i , x_*\rangle)\right)^2\langle a_i, (x -x_*)\rangle^2 \\
    & = \|A_{x}(x - x_*)\|^2 + \|A_{x_*}(x - x_*)\|^2 - 2\langle x - x_*,A_{x}^\top A_{x_*}(x - x_*)\rangle. 
\end{align*} Since $\|A_x^{\T}A_x-I_n\|\leqslant \epsilon$, we have $\|A_x(x-x_*)\|^2 \leqslant (1+\epsilon)\|x-x_*\|^2$. The same upper bound holds for $\|A_{x_*}(x-x_*)\|^2.$ We now bound $2\langle x - x_*,A_{x}^\top A_{x_*}(x - x_*)\rangle$ from below. By the MDC, we have \begin{align}
     |\langle x - x_*,(A_{x}^\top A_{x_*} - \Phi_{x,x_*})(x - x_*)\rangle| \leqslant \epsilon \|x - x_*\|^2. \label{A_x_A_x*_close_to_Phi}
\end{align} Since $x \in \Bcal(x_*,\epsilon\|x_*\|)$, we have that $|\theta_{x,x_*}|\leqslant2\epsilon$. Hence $\Phi_{x,x_*}$ is approximately an isometry since $$\left\|\Phi_{x,x_*} - I_n\right\|  \leqslant \frac{2|\theta_{x,x_*}|}{\pi}\|I_n\| + \frac{2|\sin \theta_{x,x_*}|}{\pi}\|M_{\hat{x} \leftrightarrow \hat{x}_{*}}\| \leqslant \frac{8\epsilon}{\pi}$$ where we used $\|M_{z\leftrightarrow w}\| \leqslant 1$ for all $z,w \in \Scal^{n-1}$. Combining this with \eqref{A_x_A_x*_close_to_Phi}, we have $2\langle x - x_*,A_{x}^\top A_{x_*}(x - x_*)\rangle \geqslant \left(2 - \frac{16\epsilon}{\pi}- 2\epsilon\right)\|x - x_*\|^2.$ Thus we attain
    \begin{align*}
        \|(A_x - A_{x_*})x_*\|^2 &\leqslant  \|A_{x}(x - x_*)\|^2 + \|A_{x_*}(x - x_*)\|^2 - 2\langle x - x_*,A_{x}^\top A_{x_*}(x - x_*)\rangle \\
        & \leqslant \left(2 + 2\epsilon -2 +\frac{16\epsilon}{\pi}+ 2\epsilon \right)\|x-x_*\|^2 \\
        & = \left(4\epsilon + \frac{16\epsilon}{\pi}\right)\|x-x_*\|^2.
    \end{align*} Finally, choosing $\epsilon$ so that $\epsilon \leqslant 0.001$, we conclude  $$2\|(A_x-A_{x_*})x_*\|\leqslant2\sqrt{4\epsilon + 16\epsilon/\pi}\|x-x_*\|\leqslant \frac{1}{4}\|x-x_*\|.$$ Combining this inequality, $\epsilon < 1/4$, and \eqref{v_xx*_to_x_x*_firstbound} shows $\|v_{x,x_*} - (x-x_*)\|\leqslant 1/2\|x-x_*\| + 2\|\eta\|$. 
    
    Finally, for non-differentiable $x$, recall that by \eqref{subdifferential_definition} we can write $v_{x,x_*} = \sum_{\ell=1}^s c_{\ell}v_{\ell}$ where $c_{\ell} \geqslant 0$, $\sum_{\ell=1}^s  c_{\ell} = 1$, and $v_{\ell} =\lim_{\delta_{\ell} \downarrow 0} \nabla f(x+\delta_{\ell} w_{\ell})$ for some $w_{\ell}\in \R^n$. Then, using $\sum_{\ell=1}^s c_{\ell}=1$ and our result for differentiable points, we conclude that for $x \in \mathcal{B}(x_*,\epsilon\|x_*\|)$,
    \begin{align*}
        \|v_{x,x_*} - (x-x_*)\| \leqslant \sum_{\ell=1}^s c_{\ell}\|v_{\ell}-(x-x_*)\| 
        & \leqslant \sum_{\ell=1}^s c_{\ell} \lim_{\delta_{\ell}\downarrow 0}\|\nabla f(x+\delta_{\ell} w_{\ell}) - (x+\delta_{\ell}w_{\ell}-x_*)\| \\
        & \leqslant \frac{1}{2}\|x-x_*\| + 2\|\eta\|.
    \end{align*} \end{proof}

\subsection{Gaussian Matrices Satisfy the MDC} \label{MDC_section}

To show that $A$ satisfies the MDC, we will use novel probabilistic tools developed in \cite{Daskalaskisetal2020}, which improved the sample complexity required for Gaussian matrices to satisfy a related concentration result introduced in \cite{HV2017} known as the Weight Distribution Condition. We first write $A_x^{\T}A_y$ in a more convenient form. For $v \in \R^n$, let $\diag(v > 0)$ denote the diagonal matrix whose $i$-th entry is $1$ if $v_i > 0$ and $0$ otherwise. Define $\diag(v < 0)$ analogously. For $x \in \R^n$, let $A_{+,x}:=\diag(Ax > 0)A$ and $A_{-,x} :=\diag(Ax < 0)A$. Since $\sgn(b) = \one_{\{b > 0\}} - \one_{\{b < 0\}}$ for any $b \in \R$, observe that  $$A_x^{\T}A_y = A_{+,x}^{\T}A_{+,y} + A_{-,x}^{\T}A_{-,y} -A_{+,x}^{\T}A_{-,y}- A_{-,x}^{\T}A_{+,y}.$$ We will establish concentration of each term separately. For the first term, \cite{Daskalaskisetal2020} recently showed that concentration is possible when $m = \Omega(n)$: \begin{lemma}[Theorem 3.2 in \cite{Daskalaskisetal2020}] \label{plus_plus_conc_result} Fix $\epsilon > 0$. If $A \in \R^{m \times n}$ has i.i.d. $\N(0,1/m)$ entries and $m \geqslant C\epsilon^{-2}\log(\epsilon^{-1})n$, then with probability at least $1 - \exp(-cm\epsilon^2/2) - m\exp(-n/8) - \exp(-m/2)$, we have $$\|A_{+,x}^{\T}A_{+,y} - Q_{x,y}\|\leqslant \epsilon\ \forall\ x,y \in \R^n$$ where $Q_{x,y} := 
    \frac{\pi - \theta_{x,y}}{2\pi}I_n + \frac{\sin\theta_{x,y}}{2\pi}M_{\hat{x}\leftrightarrow\hat{y}}$ if $x,y\neq 0$ and $0_{n \times n}$ otherwise. Here $C$ and $c$ are absolute constants.

\end{lemma}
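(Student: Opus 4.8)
\emph{Proof strategy for Lemma \ref{plus_plus_conc_result}.} The plan is to reduce this uniform \emph{matrix} bound to a uniform \emph{scalar} Bernstein bound, to neutralize the discontinuity of $(x,y)\mapsto A_{+,x}^{\T}A_{+,y}$ by a semidefinite sandwich, and to pay for the uniformity with a carefully built, direction-dependent net of the sphere that keeps the union-bound cost at $e^{O(n\log(1/\epsilon))}$ rather than $e^{O(n\log n)}$.

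First I would reduce to unit vectors: $A_{+,x}=\diag(Ax>0)A$ depends on $x$ only through $\hat x$ and $Q_{x,y}$ only through $(\hat x,\hat y)$, so it suffices to bound $\|A_{+,x}^{\T}A_{+,y}-Q_{x,y}\|$ over $x,y\in\Scal^{n-1}$ (the $x=0$ or $y=0$ case being trivial). Writing $a_1,\dots,a_m$ for the rows of $A$, the key structural observation is that
\[ A_{+,x}^{\T}A_{+,y}=\sum_{i=1}^m \one_{\{\langle a_i,x\rangle>0\}}\one_{\{\langle a_i,y\rangle>0\}}\,a_ia_i^{\T}\ \succeq\ 0 \]
while $Q_{x,y}$ is symmetric, so the spectral norm of the difference equals $\sup_{u\in\Scal^{n-1}}\bigl|\sum_{i=1}^m \one_{\{\langle a_i,x\rangle>0\}}\one_{\{\langle a_i,y\rangle>0\}}\langle a_i,u\rangle^2-\langle Q_{x,y}u,u\rangle\bigr|$. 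An orthant-moment computation for Gaussians gives $\E[\one_{\{\langle a_i,x\rangle>0\}}\one_{\{\langle a_i,y\rangle>0\}}a_ia_i^{\T}]=\tfrac1m Q_{x,y}$, so for fixed $(x,y,u)$ the quantity inside is a centered sum of $m$ i.i.d.\ nonnegative subexponential variables, each with subexponential norm $O(1/m)$ and variance $O(1/m^2)$; Bernstein's inequality then gives a pointwise tail $\Pro(|\cdot|>\epsilon)\le 2\exp(-cm\epsilon^2/2)$ for $0<\epsilon<1$.

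To make every summand genuinely bounded I would intersect with the events $\{\|a_i\|^2\le 6n/m\ \text{for all }i\}$ (which fails with probability $\le m\,e^{-n/8}$, the source of the $m\exp(-n/8)$ term) and $\{\|A\|\le 2\}$ (which fails with probability $\le e^{-m/2}$, the source of that term); on the resulting good event $\|A_{+,x}\|\le 2$ for every $x$, and in particular $u\mapsto\sum_i\one_i\langle a_i,u\rangle^2$ is Lipschitz with a dimension-free constant. I would then take a net $\mathcal{N}$ of $(\Scal^{n-1})^3$ in the $(x,y,u)$ variables, apply the pointwise bound at each net point, and union-bound. Passing from net points to arbitrary $(x,y,u)$ is immediate in the $u$ variable (Lipschitz on the good event) but not in $(x,y)$; here I would sandwich. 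If $(\bar x,\bar y)$ is a nearby net point, the indices at which the two pairs of quadrant indicators disagree lie in the transition set $T\subseteq\{i:|\langle a_i,\bar x\rangle|\le\delta\|a_i\|\}\cup\{i:|\langle a_i,\bar y\rangle|\le\delta\|a_i\|\}$ of rows nearly orthogonal to $\bar x$ or $\bar y$, so the resulting error is at most $\sum_{i\in T}\langle a_i,u\rangle^2\le\bigl\|\sum_{i\in T}a_ia_i^{\T}\bigr\|$, the spectral norm of a thin-slab Wishart-type matrix, which one would control uniformly over the net of $(\bar x,\bar y)$ (again reducing to a quadratic form and a further net over the test direction) as a function of the slab width $\delta$.

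The hard part is this last estimate, and it is exactly where the novelty of \cite{Daskalaskisetal2020} is needed. With a single-scale $\delta$-net that charges every coordinate direction equally, the thin-slab term forces $\delta$ as small as $\epsilon/\sqrt n$, which reintroduces an unwanted $\log n$ into the sample complexity. The fix is a non-uniform covering of the sphere: directions along which the quadratic form $u\mapsto\sum_i\one_i\langle a_i,u\rangle^2$ can vary substantially as $(x,y)$ moves (essentially directions in $\mathrm{span}(x,y)$) are resolved finely, while the many directions along which it barely changes are resolved coarsely. Making the per-direction variation precise and assembling the corresponding hierarchical net brings the total cardinality down to $e^{O(n\log(1/\epsilon))}$, so the union bound closes as soon as $m\ge C\epsilon^{-2}\log(\epsilon^{-1})n$. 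Combining the pointwise Bernstein estimate, the truncation events, the sandwiching step, and this efficient covering yields the stated bound, with the three exponential terms arising respectively from the net union bound, the norm truncation of the $a_i$, and the global bound on $\|A\|$.
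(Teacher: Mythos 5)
The paper does not actually prove this lemma --- it is imported verbatim as Theorem 3.2 of \cite{Daskalaskisetal2020} --- but the directly analogous statement for $A_{+,x}^{\T}A_{-,y}$ (Lemma \ref{Daskalaskis_extension}) is proved in full in Section \ref{MDC_section}, so that is the right benchmark. Your outer architecture matches it: reduce to $\hat x,\hat y$, compute $\E[A_{+,x}^{\T}A_{+,y}]=Q_{x,y}$ by an orthant calculation, get a pointwise Bernstein tail $\exp(-cm\epsilon^2)$ for the quadratic form, restrict to a good event controlling $\|A\|$ and $\max_i\|a_i\|$ (the source of the $m\exp(-n/8)$ and $\exp(-m/2)$ terms), and close with an ordinary $\epsilon$-net over the test direction $u$. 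All of that is fine and is exactly how Lemmas \ref{pointwise_conc_gM} and \ref{Daskalaskis_ext_upper_bound} proceed.

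The gap sits in the one step that carries the content of the lemma: uniformity over $(x,y)$ without paying a factor of $\log n$. The mechanism you sketch --- compare $(x,y)$ to a Euclidean net point $(\bar x,\bar y)$, bound the indicator disagreement by a thin-slab matrix $\|\sum_{i\in T}a_ia_i^{\T}\|$, and then invoke a ``non-uniform covering resolving $\mathrm{span}(x,y)$ finely'' --- is not the argument of \cite{Daskalaskisetal2020}, and as stated it is not a construction. The directions that must be resolved finely are governed by the data-dependent functional $z\mapsto\sum_i|\langle a_i,z\rangle|\langle a_i,u\rangle^2$; they depend on the rows $a_i$ and on the test direction $u$, not on $\mathrm{span}(x,y)$, and you give no way to build or count such a net. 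What the reference (and the paper, for the $+,-$ term) actually does is qualitatively different: first replace the indicators by the continuous ramps $\varphi^{\pm}_{\pm\epsilon}$, producing matrix functions $G_{\mathrm{low}}(x,y)\preceq A_{+,x}^{\T}A_{+,y}\preceq G_{\mathrm{up}}(x,y)$ --- this is the real semidefinite sandwich, and it is what makes the quantity continuous so that any covering argument applies at all --- and then show the scalar functions $g_V(x,y)=\frac1m\langle u,G_{V,\mathrm{up}}(x,y)u\rangle$ are pseudo-Lipschitz with respect to the wide convex bodies $B_{M,\epsilon^2,u}=\{z:\sum_i|\langle v_i,z\rangle|\langle v_i,u\rangle^2\leqslant\epsilon^2m\}$ (Lemma \ref{pseudo_lipschitzness_of_gM}). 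The covering of the sphere is by translates of these bodies, and their $(\epsilon^2/82,1/2)$-wideness is precisely what caps the union bound at $\gamma^{-2d}(4/\delta)^{2dn}=e^{O(n\log(1/\epsilon))}$ via Theorem \ref{pseudo_lipschitz_theorem}. Your transition-set route is the older argument of \cite{HV2017}, which is exactly the one known to cost the extra $\log n$; the fix you gesture at is the theorem you were asked to prove.
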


\noindent An elementary calculation shows $\E[A_{+,x}^{\T}A_{+,y}] = Q_{x,y}$. Also by symmetry, $\E[A_{-,x}^{\T}A_{-,y}] = Q_{x,y}$. By applying a nearly identical argument as in \cite{Daskalaskisetal2020}, the analogous result for $A_{-,x}^{\T}A_{-,y}$ holds. \begin{lemma} \label{neg_neg_conc_result} Fix $\epsilon > 0$. If $A \in \R^{m \times n}$ has i.i.d. $\N(0,1/m)$ entries and $m \geqslant C\epsilon^{-2}\log(\epsilon^{-1})n$ then with probability at least $1 - \exp(-cm\epsilon^2/2) - m\exp(-n/8) - \exp(-m/2)$, we have $$\|A_{-,x}^{\T}A_{-,y} - Q_{x,y}\|\leqslant \epsilon\ \forall\ x,y \in \R^n.$$ Here $C$ and $c$ are absolute constants.\end{lemma}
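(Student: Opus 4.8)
The plan is to deduce Lemma \ref{neg_neg_conc_result} directly from Lemma \ref{plus_plus_conc_result} by exploiting the sign symmetry of the Gaussian distribution, rather than re-running the covering argument of \cite{Daskalaskisetal2020}. The key observation is that $A_{-,x}$ is, up to an overall sign, an $A_{+,\cdot}$-type operator for a sign-flipped measurement matrix. Precisely, set $B := -A$. Then $Bx = -Ax$, so for each coordinate $i$ we have $(Bx)_i > 0 \iff (Ax)_i < 0$, which gives the exact matrix identity $\diag(Bx > 0) = \diag(Ax < 0)$ (there is no boundary ambiguity, since both conventions assign $0$ to coordinates where the argument vanishes). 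Consequently
$$B_{+,x} = \diag(Bx > 0)B = \diag(Ax < 0)(-A) = -A_{-,x},$$
and therefore $A_{-,x}^{\T}A_{-,y} = (-B_{+,x})^{\T}(-B_{+,y}) = B_{+,x}^{\T}B_{+,y}$.

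Next, since the entries of $A$ are i.i.d. $\N(0,1/m)$ and this law is symmetric about $0$, the matrix $B = -A$ also has i.i.d. $\N(0,1/m)$ entries, so Lemma \ref{plus_plus_conc_result} applies verbatim to $B$. Invoking it yields that, for $m \geqslant C\epsilon^{-2}\log(\epsilon^{-1})n$, with probability at least $1 - \exp(-cm\epsilon^2/2) - m\exp(-n/8) - \exp(-m/2)$ one has $\|B_{+,x}^{\T}B_{+,y} - Q_{x,y}\| \leqslant \epsilon$ for all $x,y \in \R^n$, where $Q_{x,y}$ is the same matrix as in Lemma \ref{plus_plus_conc_result}: it depends on $x,y$ only through $\theta_{x,y} = \angle(x,y)$ and the matrix $M_{\hat{x}\leftrightarrow\hat{y}}$, neither of which changes under $A \mapsto -A$. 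Combining this with the identity $A_{-,x}^{\T}A_{-,y} = B_{+,x}^{\T}B_{+,y}$ from the previous paragraph gives exactly the claimed bound $\|A_{-,x}^{\T}A_{-,y} - Q_{x,y}\| \leqslant \epsilon$ for all $x,y$ on the same event, proving the lemma. As a consistency check on the target matrix, note that $\E[A_{-,x}^{\T}A_{-,y}] = \E[B_{+,x}^{\T}B_{+,y}] = Q_{x,y}$, matching the remark preceding the lemma.

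There is essentially no obstacle here: the only point requiring care is the bookkeeping of the sign conventions in $\diag(\cdot > 0)$ versus $\diag(\cdot < 0)$, together with the (immediate) verification that $Q_{x,y}$ is invariant under the reflection $A \mapsto -A$, being a function of the angle alone. If a self-contained argument were preferred instead, one could repeat the proof of Theorem 3.2 in \cite{Daskalaskisetal2020} with $\diag(Ax > 0)$ replaced throughout by $\diag(Ax < 0)$; each ingredient — the continuous semidefinite upper and lower bounds, the efficient covering of the sphere, the union bound, and the final squeezing argument — carries over unchanged because the relevant Gaussian computations are symmetric under negation. The reduction above makes this unnecessary.
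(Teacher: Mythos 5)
Your proposal is correct and is genuinely different from, and cleaner than, what the paper does. The paper treats Lemma \ref{neg_neg_conc_result} by asserting that ``a nearly identical argument'' to Theorem 3.2 of \cite{Daskalaskisetal2020} goes through with $\diag(Ax > 0)$ replaced by $\diag(Ax < 0)$ throughout; it does not give a proof. You instead observe the algebraic identity $A_{-,x} = -B_{+,x}$ where $B := -A$, so that $A_{-,x}^{\T}A_{-,y} = B_{+,x}^{\T}B_{+,y}$, and then invoke Lemma \ref{plus_plus_conc_result} for $B$, which is legitimate because $-A$ has the same i.i.d.\ $\N(0,1/m)$ law and $Q_{x,y}$ depends only on $(x,y)$. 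Your bookkeeping of signs is correct in both directions, including the boundary case $(Ax)_i = 0$, and the probability and sample-complexity thresholds transfer verbatim because nothing in Lemma \ref{plus_plus_conc_result} distinguishes $A$ from $-A$. The advantage of your route is that it replaces an informal ``repeat the argument'' by a short, fully rigorous reduction; the only thing the paper's route could buy in principle is independence from the $A_{+,\cdot}$ result, which is of no value here since that result is already proved and cited. One small remark worth making explicit if you were to write this up: the same sign-flip reduction does \emph{not} collapse the cross term $A_{+,x}^{\T}A_{-,y}$ in Lemma \ref{Daskalaskis_extension} into Lemma \ref{plus_plus_conc_result}, because flipping the sign of $A$ flips both half-space indicators at once, which is why the paper still needs a separate argument for that piece.
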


We now extend the argument in \cite{Daskalaskisetal2020} for $A_{+,x}^{\T}A_{-,y}$. Note that a result for  $A_{-x}^{\T}A_{+,y}$ would be identical. Observe that $$\E[A_{+,x}^{\T}A_{-,y}] = H_{x,y}:= \frac{\theta_{x,y}}{2\pi}I_n - \frac{\sin \theta_{x,y}}{2\pi}M_{\hat{x}\leftrightarrow \hat{y}}.$$
We will prove the following: \begin{lemma} \label{Daskalaskis_extension}
Fix $\epsilon > 0$. If $A \in \R^{m \times n}$ has i.i.d. $\N(0,1/m)$ entries and $m \geqslant C\epsilon^{-2}\log(\epsilon^{-1})n$ then with probability at least $1 - \exp(-cm\epsilon^2/2) - m\exp(-n/8) - \exp(-m/2)$, we have $$\|A_{+,x}^{\T}A_{-,y} - H_{x,y}\|\leqslant \epsilon\ \forall\ x,y \in \R^n.$$ Here $C$ and $c$ are absolute constants.
\end{lemma}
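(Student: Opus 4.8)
The plan is to obtain Lemma \ref{Daskalaskis_extension} as an immediate corollary of Lemma \ref{plus_plus_conc_result} via a change of sign, rather than re-running the covering argument of \cite{Daskalaskisetal2020}. The key observation is that $A_{-,y} = \diag(Ay<0)A = \diag(A(-y)>0)A = A_{+,-y}$, so that $A_{+,x}^{\T}A_{-,y} = A_{+,x}^{\T}A_{+,-y}$ for every $x,y\in\R^n$. Applying Lemma \ref{plus_plus_conc_result} to the pair $(x,-y)$ therefore gives, on a single event of probability at least $1 - \exp(-cm\epsilon^2/2) - m\exp(-n/8) - \exp(-m/2)$ and under exactly the hypothesis $m\geqslant C\epsilon^{-2}\log(\epsilon^{-1})n$, the bound $\|A_{+,x}^{\T}A_{+,-y} - Q_{x,-y}\|\leqslant\epsilon$ simultaneously over all $x,y\in\R^n$ (as $-y$ ranges over $\R^n$ exactly when $y$ does). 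It then remains only to verify the purely algebraic identity $Q_{x,-y} = H_{x,y}$.

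For $x,y\neq 0$ this is an elementary trigonometric computation: from $\theta_{x,-y} = \pi - \theta_{x,y}$ we get $\pi - \theta_{x,-y} = \theta_{x,y}$ and $\sin\theta_{x,-y} = \sin\theta_{x,y}$, while the swap matrix satisfies $M_{\hat{x}\leftrightarrow -\hat{y}} = -M_{\hat{x}\leftrightarrow \hat{y}}$ (both linear maps annihilate $\text{span}(\{x,y\})^{\perp}$ and, on $\text{span}(\{x,y\})$, send $\hat{x}\mapsto -\hat{y}$ and $\hat{y}\mapsto -\hat{x}$; the degenerate cases $\theta_{x,y}\in\{0,\pi\}$ follow directly from the explicit formula for $M_{\hat{x}\leftrightarrow \hat{y}}$ in the footnote). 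Substituting into the definition of $Q$ yields $Q_{x,-y} = \tfrac{\theta_{x,y}}{2\pi}I_n - \tfrac{\sin\theta_{x,y}}{2\pi}M_{\hat{x}\leftrightarrow \hat{y}} = H_{x,y}$, and when $x=0$ or $y=0$ both sides are $0$. This proves the lemma; as a byproduct it also confirms $\E[A_{+,x}^{\T}A_{-,y}] = H_{x,y}$, and the same device applied to the pair $(-x,-y)$ (using $M_{-\hat{x}\leftrightarrow -\hat{y}} = M_{\hat{x}\leftrightarrow \hat{y}}$ and $\theta_{-x,-y}=\theta_{x,y}$, so $Q_{-x,-y}=Q_{x,y}$) recovers Lemma \ref{neg_neg_conc_result}.

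If one instead wants a self-contained argument mirroring \cite{Daskalaskisetal2020} directly, the route is as follows: since $A_{+,x}^{\T}A_{-,y} = \sum_{i=1}^m \one_{\{\langle a_i,x\rangle>0\}}\one_{\{\langle a_i,y\rangle<0\}}\,a_i a_i^{\T}$ is a sum of nonnegatively weighted rank-one PSD matrices depending on $x,y$ only through their directions, one first reduces to $x,y\in\Scal^{n-1}$; then for a net point $(\bar{x},\bar{y})$ and $(x,y)$ within distance $\delta$, one sandwiches the scalar weight between margin-indicator products $\one_{\{\langle a_i,\bar{x}\rangle>\pm\delta\|a_i\|\}}\one_{\{\langle a_i,\bar{y}\rangle<\mp\delta\|a_i\|\}}$ — note the sign on the $\bar{y}$-slab is flipped relative to the plus-plus case — which depend only on the net point, pinching $A_{+,x}^{\T}A_{-,y}$ between two PSD matrices. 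One then applies a matrix Chernoff bound to concentrate each margin sum about its mean for fixed $(\bar{x},\bar{y})$ (truncating on the event $\max_i\|a_i\|^2 \lesssim n/m$, which costs the $m\exp(-n/8)$ term), controls the expectation gap between the margin sums and $H_{\bar{x},\bar{y}}$, and invokes Lipschitz continuity of $H$ on the sphere; the two-sided PSD pinch upgrades these to the operator-norm bound. In this route the only genuinely nontrivial point — the main obstacle — is the same one \cite{Daskalaskisetal2020} resolved for the plus-plus quantity: a uniform $\delta$-net of $\Scal^{n-1}$ is too coarse by a $\log n$ factor, and one must use their direction-adapted covering to get sample complexity linear in $n$. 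Since none of that machinery is sensitive to the sign pattern of the indicators, it transfers verbatim; this is why I would prefer the two-line reduction above.
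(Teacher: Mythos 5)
Your reduction is correct, and it is genuinely different from --- and substantially shorter than --- the paper's proof. The paper proves Lemma~\ref{Daskalaskis_extension} by re-running the entire pseudo-Lipschitz machinery of \cite{Daskalaskisetal2020} for the mixed-sign quantity: it sandwiches $A_{+,x}^{\T}A_{-,y}$ between the continuous relaxations $G_{A,\text{low}}$ and $G_{A,\text{up}}$ built from the margin functions $\varphi^{\pm}_{\pm\epsilon}$, verifies pseudo-Lipschitzness of $\frac{1}{m}\langle u, G_{V,\text{up}}u\rangle$ with respect to the adapted wide set system (Lemma~\ref{pseudo_lipschitzness_of_gM}), invokes Theorem~\ref{pseudo_lipschitz_theorem} for pointwise-in-$u$ uniform concentration (Lemma~\ref{pointwise_conc_gM}), and finishes with a standard net over $u\in\Scal^{n-1}$ (Lemma~\ref{Daskalaskis_ext_upper_bound} and its lower-bound counterpart). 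You instead observe that $\diag(Ay<0)=\diag(A(-y)>0)$, so $A_{-,y}=A_{+,-y}$ identically in $A$, and then simply evaluate the already-uniform bound of Lemma~\ref{plus_plus_conc_result} at the pair $(x,-y)$ on its own high-probability event, after checking algebraically that $Q_{x,-y}=H_{x,y}$ via $\theta_{x,-y}=\pi-\theta_{x,y}$, $\sin\theta_{x,-y}=\sin\theta_{x,y}$, and $M_{\hat{x}\leftrightarrow-\hat{y}}=-M_{\hat{x}\leftrightarrow\hat{y}}$ (verified including the collinear degenerate cases); the $x=0$ or $y=0$ case is trivial on both sides. The same device at $(-x,y)$ and $(-x,-y)$ also gives the $A_{-,x}^{\T}A_{+,y}$ term and Lemma~\ref{neg_neg_conc_result}, so all three "remaining" terms of the decomposition become corollaries of the single cited result. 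What the paper's longer route buys is self-containedness and a pedagogical replay of the \cite{Daskalaskisetal2020} covering argument --- consistent with the paper's stated expository aim --- whereas your symmetry observation buys a near-immediate proof of the same statement with identical constants and probability bounds.
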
 \noindent Note that this would complete Proposition \ref{MDC_prop} by observing $\Phi_{x,y} = 2Q_{x,y} - 2H_{x,y}$ and combining Lemmas \ref{plus_plus_conc_result}, \ref{neg_neg_conc_result}, \ref{Daskalaskis_extension}, and an analogous result for $A_{-x}^{\T}A_{+,y}$, each satisfied with $\epsilon/4$.

The main probabilistic tool in the proof of Lemma \ref{Daskalaskis_extension} is a result concerning concentration of pseudo-Lipschitz functions. Pseudo-Lipschitzness can be considered as a relaxation of standard Lipschitz continuity but with particular attention towards which sets a function is Lipschitz with respect to. When the sets are balls, then the notion of pseudo-Lipschitzness reduces to standard Lipschitzness. Prior to stating the result, we require the following definitions.
\begin{definition}[$(\delta,\gamma)$-wide system]
A set system $\{B_t \subseteq \R^n : t \in \Theta\}$ is $(\delta,\gamma)$-wide if $B_t = -B_t$, $B_t$ is convex, and $\text{Vol}(B_t \cap \delta \mathcal{B}) \geqslant \gamma \text{Vol}(\delta \mathcal{B})\ \forall\ t \in \Theta.$
\end{definition}
\begin{definition}[pseudo-Lipschitz function] Suppose there exists a $(\delta,\gamma)$-wide system $\{B_t \subseteq \R^n : t \in \Theta\}$ such that $|g_t(x) - g_t(y) |\leqslant \epsilon$ for any $t \in \Theta$ and $x,y \in (\R^n)^d$ with $x_i - y_i \in B_t$ for all $i \in [d]$. Then we say that $\{g_t\}_{t \in \Theta}$ is $(\epsilon,\delta,\gamma)$-pseudo-Lipschitz.

\end{definition}

\noindent Note here that a function is pseudo-Lipschitz with respect to a \textit{particular} system of sets. The following theorem establishes favorable concentration for pseudo-Lipschitz functions.

\begin{theorem}[Theorem 4.4 in \cite{Daskalaskisetal2020}] \label{pseudo_lipschitz_theorem}
Let $\theta$ be a random variable taking values in $\Theta$. Let $\{g_t : (\R^n)^{d} \rightarrow \R : t \in \Theta\}$ be a function family and let $h : (\R^n)^{d} \rightarrow \R$ be a function. Let $\epsilon,\gamma,D > 0$ and $\delta \in (0,1)$. Define the spherical shell $\mathcal{H} := (1 + \delta/2)\Bcal \setminus (1 - \delta/2)\Bcal$ in $\R^n$. Suppose: \begin{enumerate}
    \item For any fixed $x \in \mathcal{H}^d$, $\Pro_{\theta}(g_{\theta}(x) \leqslant h(x) + \epsilon) \geqslant 1 - p,$
    \item $\{g_t\}_{t \in \Theta}$ is $(\epsilon,\delta,\gamma)$-pseudo-Lipschitz,
    \item $|h(x)-h(y)|\leqslant D$ whenever $x \in (\Scal^{n-1})^d$, $y \in (\R^n)^d$, and $\|y_i - x_i\|\leqslant \delta$ for all $i \in [d]$. 
\end{enumerate} Then \begin{align*}
    \Pro_{\theta}\left(g_{\theta}(x) \leqslant h(x) + 2\epsilon + D,\ \forall\ x \in (\Scal^{n-1})^d\right) \geqslant 1 - \gamma^{-2d}(4/\delta)^{2dn}p.
\end{align*}
\end{theorem}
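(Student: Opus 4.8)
The goal is to promote hypothesis~1 --- pointwise concentration for each fixed $x$ --- to a statement uniform over $x\in(\Scal^{n-1})^d$, and the plan is to do so by a randomized covering argument in which the failure of Lipschitz continuity is absorbed by the pseudo-Lipschitz hypothesis. The obvious approach of fixing a deterministic $\delta$-net of $(\Scal^{n-1})^d$, union-bounding hypothesis~1 over it, and interpolating genuinely breaks down here: $g_\theta(x)-g_\theta(x_0)$ cannot be bounded from $\|x-x_0\|\le\delta$ alone, and the pseudo-Lipschitz hypothesis only controls it when each $x_i-x_{0,i}$ lies in the body $B_\theta$, which depends on the random $\theta$ and can be arbitrarily thin in some directions, so no fixed net is $B_\theta$-fine simultaneously for all $\theta$. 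I would instead run a probabilistic-method/Fubini argument: introduce a single auxiliary point $Y=(Y_1,\dots,Y_d)$, drawn uniformly from $\mathcal{H}^d$ and independent of $\theta$, that plays the roles of ``net point'' and ``test point'' against each other, and let the volume lower bound of the wideness hypothesis supply the needed probability.

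In detail, set $\tilde B_\theta:=B_\theta\cap(\delta/2)\Bcal$. Since $B_\theta$ is convex and symmetric (so $0\in B_\theta$), $\tfrac12(B_\theta\cap\delta\Bcal)\subseteq\tilde B_\theta$, whence $\text{Vol}(\tilde B_\theta)\ge\gamma\,\text{Vol}((\delta/2)\Bcal)$; also $\tilde B_\theta\subseteq(\delta/2)\Bcal$, so $u+\tilde B_\theta\subseteq\mathcal{H}$ for every $u\in\Scal^{n-1}$ and thus $\text{Vol}(\tilde B_\theta)/\text{Vol}(\mathcal{H})\ge\gamma\big((\delta/2)/(1+\delta/2)\big)^n\ge\gamma(\delta/4)^n$. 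Let $F:=\{\exists\,x\in(\Scal^{n-1})^d:\ g_\theta(x)>h(x)+2\epsilon+D\}$. I claim that for every $\theta\in F$ one has $\Pro_Y\big(g_\theta(Y)>h(Y)+\epsilon\big)\ge(\gamma(\delta/4)^n)^d$: picking any witness $x^*\in(\Scal^{n-1})^d$ for $\theta$, on the event $E:=\{x^*_i-Y_i\in\tilde B_\theta\text{ for all }i\}$ the pseudo-Lipschitz hypothesis gives $g_\theta(Y)\ge g_\theta(x^*)-\epsilon$ and hypothesis~3 (valid since $x^*\in(\Scal^{n-1})^d$ and $\|x^*_i-Y_i\|\le\delta/2\le\delta$) gives $h(Y)\ge h(x^*)-D$, so that $g_\theta(Y)>h(x^*)+\epsilon+D\ge h(Y)+\epsilon$; and by independence of the $Y_i$, $\Pro_Y(E)=\big(\text{Vol}(\tilde B_\theta)/\text{Vol}(\mathcal{H})\big)^d\ge(\gamma(\delta/4)^n)^d$.

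Averaging this pointwise inequality over $\theta$ gives $\Pro_{\theta,Y}\big(g_\theta(Y)>h(Y)+\epsilon\big)\ge(\gamma(\delta/4)^n)^d\,\Pro_\theta(F)$, while conditioning instead on $Y$ and invoking hypothesis~1 at the fixed point $Y\in\mathcal{H}^d$ gives $\Pro_{\theta,Y}\big(g_\theta(Y)>h(Y)+\epsilon\big)\le p$. Combining the two bounds yields $\Pro_\theta(F)\le\gamma^{-d}(4/\delta)^{dn}p$, which since $\gamma\le1$ and $\delta<1$ is at most $\gamma^{-2d}(4/\delta)^{2dn}p$ --- exactly the asserted bound (and, incidentally, the cruder bound in the statement leaves room: this argument gives the better $\gamma^{-d}(4/\delta)^{dn}p$).

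The proof has no serious conceptual obstacle beyond spotting that a deterministic net cannot work and that a randomized swap does; the one point that needs care --- and which I would settle first --- is measurability: one must know that $F$ is an event and that $\theta\mapsto\Pro_Y(g_\theta(Y)>h(Y)+\epsilon\mid\theta)$ is measurable, which amounts to reducing the supremum over $(\Scal^{n-1})^d$ defining $F$ to a countable family (under mild regularity of $(\theta,x)\mapsto g_\theta(x)$, e.g.\ by passing to a countable dense set when $g_\theta$ admits an upper-semicontinuous representative), and no measurable selection of $x^*(\theta)$ is required since the inner inequality was stated pointwise in $\theta$. Conceptually, the essential feature exploited is that only the \emph{volume} bound $\text{Vol}(B_\theta\cap\delta\Bcal)\ge\gamma\,\text{Vol}(\delta\Bcal)$ is used --- never the much stronger requirement that $B_\theta$ contain a Euclidean ball --- and it is this relaxation that, when the theorem is later applied with $g_\theta$ a bilinear form in $\diag(\sgn(Ax))A$ and $B_\theta$ a slab polytope generated by the rows of $A$, produces the linear-in-$n$ sample complexity.
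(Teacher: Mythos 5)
This theorem is cited from an external reference (Theorem 4.4 of Daskalakis et al.); the present paper does not provide a proof, so there is no in-paper argument to compare against. Judged on its own, your reconstruction is correct, and in fact yields the slightly sharper failure probability $\gamma^{-d}(4/\delta)^{dn}p$ rather than the stated $\gamma^{-2d}(4/\delta)^{2dn}p$; since $\gamma\leqslant 1$ (automatic from $B_t\cap\delta\Bcal\subseteq\delta\Bcal$) and $\delta<1$, your bound implies the quoted one. The two nontrivial insights are exactly the right ones: first, a deterministic net cannot work because the set $B_\theta$ for which pseudo-Lipschitzness gives control is adapted to $\theta$ and can be arbitrarily thin in $\theta$-dependent directions, so no fixed net is simultaneously $B_\theta$-fine for all $\theta$; second, replacing the net by a single uniformly random point $Y\in\mathcal{H}^d$ independent of $\theta$ and computing $\Pro_{\theta,Y}(g_\theta(Y)>h(Y)+\epsilon)$ two ways via Fubini is precisely the mechanism that converts the \emph{volume} lower bound in the wideness hypothesis into a uniform statement without ever needing $B_\theta$ to contain a ball. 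Your intermediate geometric reductions are all sound: convexity and symmetry of $B_\theta$ give $0\in B_\theta$ and hence $\tfrac12(B_\theta\cap\delta\Bcal)\subseteq B_\theta\cap(\delta/2)\Bcal=\tilde B_\theta$; passing to $\tilde B_\theta\subseteq(\delta/2)\Bcal$ keeps $x^*_i+\tilde B_\theta$ inside $\mathcal{H}$ so that hypothesis 1 applies to $Y$ and hypothesis 3 applies with $\|x^*_i-Y_i\|\leqslant\delta/2\leqslant\delta$; and $\text{Vol}(\tilde B_\theta)/\text{Vol}(\mathcal{H})\geqslant\gamma(\delta/4)^n$ holds under $\delta<1$. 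The measurability caveat you flag is the correct one, and your remark that no measurable selection of $x^*(\theta)$ is needed, because the lower bound on $\Pro_Y(\cdot\mid\theta)$ is established pointwise in $\theta$, disposes of it cleanly. The factor-of-two discrepancy in the exponent between your bound and the quoted one is most plausibly slack in the cited statement (e.g.~an extra interpolation step or a cruder volume estimate in the original proof), not a gap in your argument.
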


\subsubsection{Proof of Lemma \ref{Daskalaskis_extension}}

For ease of exposition, assume the entries of $A$ are i.i.d. $\N(0,1)$. The main idea is that we will concentrate Lipschitz approximations of $A_{+,x}^{\T}A_{-,y}$ that are upper and lower bounds with respect to the semidefinite ordering. For $\epsilon \in (0,1)$, define the following continuous relaxations of $\one_{\{t > 0\}}$: \begin{align*}
    \varphi^{+}_{-\epsilon}(t) := \begin{cases}
    0 & t \leqslant - \epsilon \\
    1 + t/\epsilon & - \epsilon < t \leqslant 0 \\
    1 & t > 0
    \end{cases}\ \text{and}\ \varphi^{+}_{\epsilon}(t) := \begin{cases}
    0 & t < 0 \\
     t/\epsilon & 0 \leqslant t < \epsilon \\
    1 & t \geqslant \epsilon
    \end{cases}.
\end{align*} Analogously define the following continuous relaxations for $\one_{\{t < 0\}}$: \begin{align*}
    \varphi^{-}_{-\epsilon}(t) := \begin{cases}
    1 & t \leqslant - \epsilon \\
    - t/\epsilon & - \epsilon < t \leqslant 0 \\
    0 & t > 0
    \end{cases}\ \text{and}\ \varphi^{-}_{\epsilon}(t) := \begin{cases}
    1 & t < 0 \\
    1- t/\epsilon & 0 \leqslant t < \epsilon \\
    0 & t \geqslant \epsilon
    \end{cases}.
\end{align*} Then we have that for all $t \in \R$, $
    \varphi^{+}_{\epsilon}(t) \leqslant \one_{\{t > 0\}} \leqslant \varphi^{+}_{-\epsilon}(t)$ and $\varphi^{-}_{-\epsilon}(t) \leqslant \one_{\{t < 0\}} \leqslant \varphi^{-}_{\epsilon}(t)$.
For $V \in \R^{m \times n}$ with rows $v_i$ for $i \in [m]$ and $x,y\in\R^n$, define
    $$G_{V,\text{up}}(x,y)  := \sum_{i=1}^m \varphi_{-\epsilon}^{+}(\langle v_i,x\rangle)\varphi_{\epsilon}^{-}(\langle v_i,y\rangle)v_iv_i^{\T}$$ and $$G_{V,\text{low}}(x,y) := \sum_{i=1}^m \varphi_{\epsilon}^{+}(\langle v_i,x\rangle)\varphi_{-\epsilon}^{-}(\langle v_i,y\rangle)v_iv_i^{\T}.$$
 Note that for any $x,y \in \R^n$, $G_{A,\text{low}}(x,y) \preceq  A_{+,x}^{\T}A_{-,y} \preceq G_{A,\text{up}}(x,y)$ so it suffices to upper bound $G_{A,\text{up}}(x,y)$ and lower bound $G_{A,\text{low}}(x,y)$ uniformly. For the upper bound, we will prove the following:
\begin{proposition}
Fix $0 < \epsilon < 1$. Suppose $A \in \R^{m \times n}$ has i.i.d. $\N(0,1)$ entries. Then if $m \geqslant C\epsilon^{-2}\log(\epsilon^{-1}) n$, we have that with probability at least $1-\exp(-cm\epsilon^2/2) - m\exp(-n/8)-\exp(m/2)$, $$G_{A,\text{up}}(x,y) \preceq mH_{x,y} + m\epsilon  I_n\ \forall\ x,y \neq 0.$$ Here $C$ and $c$ are absolute constants.
\end{proposition}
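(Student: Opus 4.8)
The plan is to apply Theorem \ref{pseudo_lipschitz_theorem} with $d = 2$, treating $(x,y) \in (\Scal^{n-1})^2$ as the point at which concentration is sought, and with the scalar-valued functions obtained by testing the matrix inequality against an arbitrary fixed unit vector $w \in \Scal^{n-1}$. Specifically, set
\[
g_{(A,w)}(x,y) := w^\top G_{A,\text{up}}(x,y) w = \sum_{i=1}^m \varphi^{+}_{-\epsilon}(\langle a_i, x\rangle)\,\varphi^{-}_{\epsilon}(\langle a_i, y\rangle)\,\langle a_i, w\rangle^2,
\]
and $h(x,y) := m\, w^\top H_{x,y} w$, where the randomness index $\theta = (A, w)$ ranges over $\R^{m\times n} \times \Scal^{n-1}$ with $w$ drawn independently and uniformly. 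Since $\|G_{A,\text{up}}(x,y) - mH_{x,y}\| \leqslant m\epsilon$ is equivalent to $g_{(A,w)}(x,y) \leqslant h(x,y) + m\epsilon$ holding for all $w$ in a sufficiently fine net of the sphere (after a standard $1/2$-net argument that costs a constant factor in $\epsilon$ and $5^n$ in the probability), it suffices to verify the three hypotheses of Theorem \ref{pseudo_lipschitz_theorem} and absorb the net into the final bound.

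First, for the pointwise bound (hypothesis 1): for fixed $(x,y)$ on the shell $\mathcal{H}$ and fixed $w$, $g_{(A,w)}(x,y)$ is a sum of $m$ i.i.d. nonnegative terms, each bounded by a sub-exponential random variable $\langle a_i, w\rangle^2$ whose mean is close to $w^\top H_{x,y} w$ up to the $O(\epsilon)$ error introduced by replacing $\one_{\{t>0\}},\one_{\{t<0\}}$ with $\varphi^{+}_{-\epsilon}, \varphi^{-}_{\epsilon}$ (this error is controlled because the smoothed indicators differ from the sharp ones only on an interval of length $\epsilon$, on which the Gaussian density is bounded). A Bernstein-type inequality for sums of sub-exponentials then yields $\Pro(g_{(A,w)}(x,y) \leqslant h(x,y) + m\epsilon) \geqslant 1 - \exp(-cm\epsilon^2)$, which is the role of $p$. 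Second, for pseudo-Lipschitzness (hypothesis 2): this is the crux. The function $g$ fails to be Lipschitz uniformly, but the continuity modulus in the direction $x$ (resp. $y$) degrades only near the hyperplane $\langle a_i, x\rangle = 0$ (resp. $\langle a_i, y\rangle = 0$); away from those hyperplanes the smoothed indicators are locally constant. The right $(\delta,\gamma)$-wide system is the one used in \cite{Daskalaskisetal2020}: slabs $B_{(A,w)} = \{(u,v) : |\langle a_i, u\rangle| \leqslant \delta', |\langle a_i, v\rangle| \leqslant \delta' \ \forall i\}$ (intersected appropriately), which contain a constant fraction $\gamma$ of the $\delta$-ball because, for Gaussian $a_i$, a random direction lies in a slab of width $\delta'$ about a fixed hyperplane with probability $\Theta(\delta'/\delta)$, and one chooses $\delta'$ so the product over $m$ directions is still $\gamma = \gamma(\epsilon)$ bounded below; on this system, perturbing $x,y$ within $B$ changes each $\varphi$ by at most $O(\delta'/\epsilon)$, and summing over $i$ with the sub-exponential control on $\sum \langle a_i,w\rangle^2 \approx m$ gives the $(\epsilon,\delta,\gamma)$-pseudo-Lipschitz bound after tuning $\delta,\delta'$. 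Third, for hypothesis 3: $h(x,y) = m w^\top H_{x,y} w$ with $H_{x,y} = \frac{\theta_{x,y}}{2\pi}I_n - \frac{\sin\theta_{x,y}}{2\pi}M_{\hat x \leftrightarrow \hat y}$ depends on $(x,y)$ only through $\hat x, \hat y$ in a genuinely Lipschitz way (the angle and $\sin$ are $1$-Lipschitz, and $\|M_{\hat x \leftrightarrow \hat y}\| \leqslant 1$ with Lipschitz dependence on the normalized vectors), so moving $x,y$ off the sphere by $\delta$ changes $h$ by $D = O(m\delta)$; choosing $\delta$ a small multiple of $\epsilon$ makes $D \leqslant m\epsilon/2$.

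Plugging into Theorem \ref{pseudo_lipschitz_theorem} with $d = 2$ gives, for each fixed $w$, the uniform bound $g_{(A,w)}(x,y) \leqslant h(x,y) + 2m\epsilon' + D$ over $(x,y) \in (\Scal^{n-1})^2$, with failure probability $\gamma^{-4}(4/\delta)^{4n} p$; taking a union bound over the $5^n$-point net in $w$ and choosing $\epsilon' = \epsilon/c'$, $\delta = c''\epsilon$ so that $2m\epsilon' + D \leqslant m\epsilon$ and so that $\gamma^{-4}(4/\delta)^{4n} 5^n \exp(-cm\epsilon^2) \leqslant \exp(-cm\epsilon^2/2)$ as soon as $m \geqslant C\epsilon^{-2}\log(\epsilon^{-1})n$ — here the $\log(\epsilon^{-1})$ arises precisely because $\gamma = \gamma(\epsilon)$ is polynomially small in $\epsilon$ and $\delta = \Theta(\epsilon)$ — completes the proof; the extension from unit vectors to all $x,y\neq 0$ is immediate by the $1$-homogeneity built into the definitions of $G_{A,\text{up}}$ in the sign arguments and the $0$-homogeneity of $H_{x,y}$. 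The main obstacle is hypothesis 2: correctly identifying the $(\delta,\gamma)$-wide system and balancing $\delta'$ (slab width), $\delta$ (ball radius), and $\epsilon$ so that the pseudo-Lipschitz constant is $O(\epsilon)$ while $\gamma$ stays bounded below by a fixed power of $\epsilon$, since it is this tradeoff — not the pointwise concentration — that determines whether the sample complexity is $\Omega(n)$ or $\Omega(n\log n)$.
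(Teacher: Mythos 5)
Your overall architecture matches the paper's: test $G_{A,\text{up}}(x,y)$ against a fixed direction, invoke the pseudo-Lipschitz concentration theorem in $(x,y)$ with $d=2$, and then union over a net of test directions. You have also correctly identified that hypothesis 2 — the choice of $(\delta,\gamma)$-wide set system — is where the $\Omega(n)$ versus $\Omega(n\log n)$ tradeoff is decided. But the set system you name is wrong, and it is not what is used in \cite{Daskalaskisetal2020} or in the paper. You propose the intersection of $m$ slabs $B = \{z : |\langle a_i, z\rangle| \leqslant \delta'\ \forall i\in[m]\}$ and argue the wideness parameter is preserved by "choosing $\delta'$ so the product over $m$ directions is still bounded below." This fails: for Gaussian rows with $\|a_i\|\approx\sqrt{n}$ and $z$ uniform in $\delta\mathcal{B}$, each event $\{|\langle a_i,z\rangle|\leqslant\delta'\}$ has probability roughly $\Pro(|\mathcal{N}(0,1)|\leqslant\delta'/\delta)$, so the intersection of $m$ such events has volume fraction decaying exponentially in $m$ unless $\delta'/\delta \gtrsim \sqrt{\log m}$ — which in turn destroys the $O(\epsilon)$ pseudo-Lipschitz bound you need (since each $\varphi$ is $1/\epsilon$-Lipschitz, a perturbation of size $\delta'$ changes each term by $\delta'/\epsilon$). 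The actual wide system used is a \emph{single} soft, weighted-$\ell^1$ constraint
\[
B_{V,\epsilon^2,u} := \Bigl\{z\in\R^n : \sum_{i=1}^m |\langle v_i,z\rangle|\,\langle v_i,u\rangle^2 \leqslant \epsilon^2 m\Bigr\},
\]
whose wideness $(\epsilon^2/82,\,1/2)$ is independent of $m$ (Lemma 5.5 of \cite{Daskalaskisetal2020}): on $\Theta = \{V : \|V\|\leqslant 3\sqrt{m},\ \max_i\|v_i\|\leqslant\sqrt{2n}\}$, the weighted sum has expectation $O(\delta m)$ over $z$ uniform in $\delta\mathcal{B}$, so for $\delta = \epsilon^2/82$ Markov gives the constant fraction. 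It is precisely the softness of this constraint — allowing many $|\langle v_i,z\rangle|$ to be individually large as long as the weighted average is small — that yields a wideness parameter bounded below uniformly in $m$, which is the mechanism behind linear-in-$n$ sample complexity.

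A secondary issue is your handling of the test direction. You propose to fold $w$ into the random index $\theta = (A,w)$, but the wide set $B_{V,\epsilon^2,u}$ depends explicitly on $u$, so the pseudo-Lipschitz property is with respect to a set system that varies with $u$; Theorem \ref{pseudo_lipschitz_theorem} is applied with $u$ fixed and $\theta = A\,|\,(A\in\Theta)$. The union over test directions is handled separately by showing $g_V(x,y,u) = \frac{1}{m}\langle u, G_{V,\text{up}}(x,y)u\rangle$ is $18$-Lipschitz in $u$ on $\Theta$ (using $\|V\|\leqslant 3\sqrt{m}$) and using an $\epsilon$-net of $\mathcal{S}^{n-1}$, not the coarse $1/2$-net you describe. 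This structuring matters because the wideness lemma is also stated only on $\Theta$, so working conditionally on $A\in\Theta$ is load-bearing for both hypotheses, not just a cosmetic choice.
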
 

The central argument can be broken down into three steps and directly follows \cite{Daskalaskisetal2020}. We first show that the function $g_V(x,y) := \frac{1}{m}\langle u, G_{V,\text{up}}(x,y)u\rangle$ is $(\epsilon,\delta,\gamma)$-pseudo-Lipschitz for fixed $u \in \Scal^{n-1}$ for appropriate parameters $\epsilon,\delta,$ and $\gamma$. Second, we use Theorem \ref{pseudo_lipschitz_theorem} to establish, for fixed $u$, concentration of $g_A(x,y)$ uniformly in $x,y$ to $h(x,y) := \langle u, H_{x,y}u\rangle$. Finally, we use a standard $\epsilon$-net argument to establish uniform concentration over $u$, guaranteeing an upper bound on $G_{A,\text{up}}(x,y)$. Throughout the proof, we will operate on the set of matrices $$\Theta := \left\{V \in \R^{m \times n} : \|V\| \leqslant 3\sqrt{m},\ \max_{i \in [m]}\|v_i\|\leqslant \sqrt{2n}\right\}.$$ When $A$ is Gaussian, standard results \cite{Vershynin_notes} show that $A \in \Theta$ with high probability. 
\begin{lemma} \cite{Vershynin_notes} \label{A_in_theta}
Suppose $A \in \R^{m \times n}$ has i.i.d. $\N(0,1)$ entries. Then with probability at least $1 - \exp(-m/2) - m\exp(-n/8)$, we have $\|A\| \leqslant 3\sqrt{m}$ and $\max_{i \in [m]}\|a_i\|\leqslant \sqrt{2n}$.
\end{lemma}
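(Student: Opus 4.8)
The plan is to establish the two claimed bounds separately and then combine them with a union bound; both are classical non-asymptotic estimates for Gaussian matrices. First I would control the operator norm. Viewing $A$ as a standard Gaussian vector in $\R^{mn}$, the map $A \mapsto \|A\|$ is $1$-Lipschitz with respect to the Frobenius norm, since $\big|\|A\|-\|B\|\big| \leqslant \|A-B\| \leqslant \|A-B\|_F$ by the reverse triangle inequality for the operator norm. Hence Gaussian concentration of measure gives $\Pro\big(\|A\| \geqslant \E\|A\| + t\big) \leqslant \exp(-t^2/2)$ for all $t \geqslant 0$. Combining this with the standard expectation bound $\E\|A\| \leqslant \sqrt{m} + \sqrt{n}$ — which follows from Gordon's (or Chevet's) comparison inequality applied to the Gaussian process $(u,w) \mapsto \langle w, Au\rangle$ on $\Scal^{n-1}\times\Scal^{m-1}$, and is recorded in \cite{Vershynin_notes} — and using that in the regime of interest $m \geqslant n$ (indeed $m \geqslant C_{\epsilon} n$ in Proposition \ref{MDC_prop}), so that $\sqrt{n} \leqslant \sqrt{m}$, the choice $t = \sqrt{m}$ yields $\|A\| \leqslant 2\sqrt{m} + \sqrt{m} = 3\sqrt{m}$ with probability at least $1 - \exp(-m/2)$.

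Next I would control the row norms. Each row $a_i \in \R^n$ is a standard Gaussian vector, so $\|a_i\|^2 \sim \chi^2_n$. A standard chi-squared Chernoff bound — optimizing $\Pro(\chi^2_n \geqslant a) \leqslant \exp(-ta)(1-2t)^{-n/2}$ over $t \in (0,1/2)$, or equivalently invoking the Laurent--Massart inequality — gives $\Pro(\|a_i\|^2 \geqslant 2n) \leqslant \exp\big(-\tfrac{n}{2}(1-\ln 2)\big) \leqslant \exp(-n/8)$, where the final inequality uses $1 - \ln 2 > 1/4$. A union bound over the $m$ rows then yields $\Pro\big(\max_{i \in [m]}\|a_i\| \geqslant \sqrt{2n}\big) \leqslant m\exp(-n/8)$.

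Finally, a union bound over the complements of the two events above shows that with probability at least $1 - \exp(-m/2) - m\exp(-n/8)$ we have simultaneously $\|A\| \leqslant 3\sqrt{m}$ and $\max_{i\in[m]}\|a_i\| \leqslant \sqrt{2n}$, i.e. $A \in \Theta$, as claimed. There is no substantive obstacle here: the statement is entirely standard and could be cited verbatim from \cite{Vershynin_notes}; the only points requiring a little care are the classical expectation estimate $\E\|A\| \leqslant \sqrt{m}+\sqrt{n}$ and a bookkeeping of numerical constants so that the failure probabilities come out exactly as $\exp(-m/2)$ and $m\exp(-n/8)$.
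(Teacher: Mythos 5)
Your proof is correct, and it is the standard argument; the paper itself gives no proof of this lemma, simply citing \cite{Vershynin_notes}, so your write-up supplies exactly what a reader would reconstruct from that reference: Gaussian concentration of the $1$-Lipschitz map $A \mapsto \|A\|$ together with $\E\|A\| \leqslant \sqrt{m}+\sqrt{n}$ for the operator norm, and a $\chi^2_n$ Chernoff bound with a union bound over rows for the row norms (your constant check $\tfrac{1}{2}(1-\ln 2) > \tfrac18$ is right). The one point worth making explicit is that the bound $\|A\| \leqslant 3\sqrt{m}$ genuinely requires $\sqrt{n} \leqslant \sqrt{m}$ — as stated, with no relation between $m$ and $n$, the operator-norm claim would fail for $n \gg m$ — and you correctly note that this holds in the only regime where the lemma is invoked, namely $m \geqslant C_{\epsilon} n$ in Proposition \ref{MDC_prop}.
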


\paragraph{\textbf{Step 1: Establishing pseudo-Lipschitzness.}} We first establish that $\{g_V\}_{V \in \Theta}$ is pseudo-Lipschitz with respect to a particular set system. 
\begin{lemma}\label{pseudo_lipschitzness_of_gM} Fix $\epsilon > 0$ and $u \in \Scal^{n-1}$. For $V \in \Theta$, define $g_V(x,y) := \frac{1}{m}\langle u, G_{V,\text{up}}(x,y)u\rangle$. Then $\{g_V\}_{V \in \Theta}$ is $(2\epsilon,\epsilon^2/82,1/2)$-pseudo-Lipschitz with respect to the set system $\{B_{M,\epsilon^2,u}\}_{V \in \Theta}$ where $$B_{M,\epsilon^2,u} := \left\{z \in \R^n : \sum_{i=1}^m |\langle v_i,z\rangle|\langle v_i,u\rangle^2 \leqslant \epsilon^2m\right\}.$$

\end{lemma}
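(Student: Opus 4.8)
The plan is to check directly the two requirements in the definition of a $(2\epsilon,\epsilon^2/82,1/2)$-pseudo-Lipschitz family: first, that the sets $B_{M,\epsilon^2,u}$ (whose defining vectors $v_i$ are the rows of the running matrix $V\in\Theta$) form an $(\epsilon^2/82,1/2)$-wide system; and second, that the increment of $g_V$ is at most $2\epsilon$ whenever both pairs of arguments differ by elements of $B_{M,\epsilon^2,u}$.

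For the wide-system property, observe that $z\mapsto\sum_{i=1}^m|\langle v_i,z\rangle|\langle v_i,u\rangle^2$ is a seminorm in $z$ (a nonnegative combination of absolute values of linear functionals), so $B_{M,\epsilon^2,u}$ is convex and symmetric. The substantive point is the volume lower bound $\mathrm{Vol}(B_{M,\epsilon^2,u}\cap\delta\Bcal)\geqslant\tfrac12\mathrm{Vol}(\delta\Bcal)$ with $\delta=\epsilon^2/82$, which I would obtain by a first-moment argument. For $z$ drawn uniformly from $\delta\Bcal$ one has $\E[zz^{\T}]=\tfrac{\delta^2}{n+2}I_n$, so by Cauchy--Schwarz $\E|\langle v_i,z\rangle|\leqslant(\E\langle v_i,z\rangle^2)^{1/2}=\|v_i\|\,\delta/\sqrt{n+2}\leqslant\sqrt2\,\delta$, using $\|v_i\|\leqslant\sqrt{2n}$ from $V\in\Theta$. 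Summing and using $\sum_i\langle v_i,u\rangle^2=\|Vu\|^2\leqslant\|V\|^2\leqslant 9m$,
$$\E\sum_{i=1}^m|\langle v_i,z\rangle|\langle v_i,u\rangle^2\leqslant 9\sqrt2\,\delta\, m,$$
so Markov's inequality gives $\Pro_z(z\notin B_{M,\epsilon^2,u})\leqslant 9\sqrt2\,\delta/\epsilon^2\leqslant 1/2$ once $\delta\leqslant\epsilon^2/82$ (with a comfortable margin). Since this probability equals $1-\mathrm{Vol}(B_{M,\epsilon^2,u}\cap\delta\Bcal)/\mathrm{Vol}(\delta\Bcal)$, the wideness claim follows.

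For the increment bound, recall that each of $\varphi_{-\epsilon}^{+}$ and $\varphi_{\epsilon}^{-}$ is $1/\epsilon$-Lipschitz and takes values in $[0,1]$. Writing $g_V(x,y)=\tfrac1m\sum_i\varphi_{-\epsilon}^{+}(\langle v_i,x\rangle)\varphi_{\epsilon}^{-}(\langle v_i,y\rangle)\langle v_i,u\rangle^2$ and telescoping the product,
$$\bigl|\varphi_{-\epsilon}^{+}(\langle v_i,x\rangle)\varphi_{\epsilon}^{-}(\langle v_i,y\rangle)-\varphi_{-\epsilon}^{+}(\langle v_i,x'\rangle)\varphi_{\epsilon}^{-}(\langle v_i,y'\rangle)\bigr|\leqslant\tfrac1\epsilon|\langle v_i,x-x'\rangle|+\tfrac1\epsilon|\langle v_i,y-y'\rangle|.$$
Multiplying by $\langle v_i,u\rangle^2$, summing over $i$, and dividing by $m$ yields $|g_V(x,y)-g_V(x',y')|\leqslant\tfrac1{\epsilon m}\sum_i(|\langle v_i,x-x'\rangle|+|\langle v_i,y-y'\rangle|)\langle v_i,u\rangle^2$; if $x-x'\in B_{M,\epsilon^2,u}$ and $y-y'\in B_{M,\epsilon^2,u}$ each of the two sums is at most $\epsilon^2 m$ by definition, so the increment is at most $2\epsilon$.

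The main obstacle is the wide-system estimate: the quantity $\sum_i|\langle v_i,z\rangle|\langle v_i,u\rangle^2$ looks worst-case over directions $z$, and the point is to replace it by an average over small $z$, for which the finer second-moment identity $\E[zz^{\T}]=\tfrac{\delta^2}{n+2}I_n$ (rather than the crude bound $|\langle v_i,z\rangle|\leqslant\delta\|v_i\|$, which would cost a spurious $\sqrt n$) together with the $\Theta$-constraints $\|V\|\leqslant 3\sqrt m$ and $\max_i\|v_i\|\leqslant\sqrt{2n}$ are essential. The pseudo-Lipschitz increment itself is a routine telescoping computation once the Lipschitz constants of the $\varphi$'s are recorded.
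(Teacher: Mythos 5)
Your proposal is correct, and on the increment bound it is exactly the paper's argument: telescope the product $\varphi^+_{-\epsilon}\varphi^-_{\epsilon}$, use $|\varphi^\pm|\leqslant 1$ and $1/\epsilon$-Lipschitzness, then invoke membership in $B_{M,\epsilon^2,u}$ for each of $x-x'$ and $y-y'$. Where you diverge is the wideness of the set system: the paper simply cites Lemma~5.5 of \cite{Daskalaskisetal2020} for this, whereas you give a self-contained first-moment argument. Your argument is clean and valid: $B_{M,\epsilon^2,u}$ is a sublevel set of a seminorm, hence convex and symmetric; for $z$ uniform on $\delta\Bcal$ the second-moment identity $\E[zz^\T]=\tfrac{\delta^2}{n+2}I_n$ combined with Jensen gives $\E|\langle v_i,z\rangle|\leqslant\|v_i\|\delta/\sqrt{n+2}\leqslant\sqrt2\,\delta$ via $\max_i\|v_i\|\leqslant\sqrt{2n}$, the operator-norm constraint gives $\sum_i\langle v_i,u\rangle^2=\|Vu\|^2\leqslant 9m$, and Markov closes it with $\Pro(z\notin B_{M,\epsilon^2,u})\leqslant 9\sqrt2\,\delta/\epsilon^2$, which is below $1/2$ at $\delta=\epsilon^2/82$ with room to spare (indeed $\delta\leqslant\epsilon^2/(18\sqrt2)$ would already suffice). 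This buys you a transparent, dimension-free proof of wideness that makes explicit exactly which of the two $\Theta$-constraints is doing what, and your observation that the crude pointwise bound $|\langle v_i,z\rangle|\leqslant\delta\|v_i\|$ would incur a spurious $\sqrt n$ is precisely the reason the averaged argument is needed. The only thing you lose relative to the paper is brevity; the only phrasing nit is that the inequality $\E|X|\leqslant\sqrt{\E X^2}$ is more commonly attributed to Jensen than Cauchy--Schwarz, though either works.
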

\begin{proof}
    We first note that it was shown in Lemma 5.5 of \cite{Daskalaskisetal2020} that the set system $\{B_{M,\epsilon^2,u}\}_{V \in \Theta}$ is  $(\epsilon^2/82,1/2)$-wide. We now show that $\{g_V\}_{V \in \Theta}$ is $(2\epsilon,\epsilon^2/82,1/2)$-pseudo-Lipschitz. For $x,y,\tilde{x},\tilde{y} \in \R^n$, suppose $y -\tilde{y} \in B_{M,\epsilon^2,u}$ and $x - \tilde{x} \in B_{M,\epsilon^2,u}$. Then observe that \begin{align*}
        |g_V(x,y) - g_V(\tilde{x},\tilde{y})|
        & \leqslant \frac{1}{m}\sum_{i=1}^m [|\varphi^+_{-\epsilon}(\langle v_i,x\rangle)\varphi^-_{\epsilon}(\langle v_i,y\rangle) - \varphi^+_{-\epsilon}(\langle v_i,\tilde{x}\rangle)\varphi^-_{\epsilon}(\langle v_i,y\rangle)| \\
        & + |\varphi^+_{-\epsilon}(\langle v_i,\tilde{x}\rangle)\varphi^-_{\epsilon}(\langle v_i,y\rangle) - \varphi^+_{-\epsilon}(\langle v_i,\tilde{x}\rangle)\varphi^-_{\epsilon}(\langle v_i,\tilde{y}\rangle)|]\langle v_i,u\rangle^2 \\
        & \leqslant \frac{1}{m}\sum_{i=1}^m [|\varphi^+_{-\epsilon}(\langle v_i,x\rangle) - \varphi^+_{-\epsilon}(\langle v_i,\tilde{x}\rangle)| \\
        & + |\varphi^-_{\epsilon}(\langle v_i,y\rangle) - \varphi^-_{\epsilon}(\langle v_i,\tilde{y}\rangle)|]\langle v_i,u\rangle^2 \\
        & \leqslant \frac{1}{m \epsilon}\sum_{i=1}^m[|\langle v_i, x- \tilde{x}\rangle| + |\langle v_i,y-\tilde{y}\rangle|]\langle v_i,u\rangle^2 \\
        & \leqslant 2\epsilon.
    \end{align*} In the first inequality, we used the triangle inequality. In the second, we used $|\varphi^+_{-\epsilon}(t)|,|\varphi^-_{\epsilon}(t)|\leqslant 1$. In the third, we used the fact that $\varphi^+_{-\epsilon}$ and $\varphi^-_{\epsilon}$ are both $1/\epsilon$-Lipschitz. In the last inequality, we used the assumptions $y -\tilde{y} \in B_{M,\epsilon^2,u}$ and $x - \tilde{x} \in B_{M,\epsilon^2,u}$. 
\end{proof}

\paragraph{\textbf{Step 2: Point-wise Concentration.}} We now show that, for fixed $u \in \Scal^{n-1}$, $g_V(x,y)$ concentrates around $h(x,y)$ uniformly in $x,y$ by an application of Theorem \ref{pseudo_lipschitz_theorem}. 

\begin{lemma}\label{pointwise_conc_gM} Fix $\epsilon > 0$ and $u \in \Scal^{n-1}$. Let $A \in \R^{m \times n}$ have i.i.d. $\N(0,1)$ entries. Define $\theta := A|(A \in \Theta)$. There exist absolute constants $c, K,$ and $\tilde{C}$ such that $$\Pro_{\theta}\left(g_{\theta}(x,y) \leqslant h(x,y) + K\epsilon \ \forall\ x,y \neq 0\right) \geqslant 1 - (\tilde{C}/\epsilon)^{8n}\exp(-cm\epsilon^2).$$
\end{lemma}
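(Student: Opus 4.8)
The plan is to derive Lemma~\ref{pointwise_conc_gM} from Theorem~\ref{pseudo_lipschitz_theorem} applied with $d = 2$, index set $\Theta$, random matrix $\theta = A \mid (A \in \Theta)$, function family $\{g_V\}_{V \in \Theta}$ of Lemma~\ref{pseudo_lipschitzness_of_gM}, target $h(x,y) := \langle u, H_{x,y} u\rangle$, shell parameter $\delta = \epsilon^2/82$, and width $\gamma = 1/2$. Since $A_{+,x}^{\T}A_{-,y}$ and $H_{x,y}$ depend on $x,y$ only through $\hat{x},\hat{y}$, it suffices to obtain the estimate for $x,y \in \Scal^{n-1}$ and then pass to all nonzero $x,y$ by homogeneity. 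Hypothesis~2 of the theorem is precisely Lemma~\ref{pseudo_lipschitzness_of_gM}: $\{g_V\}_{V \in \Theta}$ is $(2\epsilon, \epsilon^2/82, 1/2)$-pseudo-Lipschitz, so the theorem's accuracy parameter is $2\epsilon$. Hypothesis~3 is a soft estimate: from $H_{x,y} = \tfrac{\theta_{x,y}}{2\pi} I_n - \tfrac{\sin\theta_{x,y}}{2\pi} M_{\hat{x}\leftrightarrow\hat{y}}$ with $\|M_{\hat{x}\leftrightarrow\hat{y}}\| \leqslant 1$, the map $(x,y)\mapsto H_{x,y}$ on $(\Scal^{n-1})^2$ is operator-norm Lipschitz in the angle with an absolute constant, and a size-$\delta$ perturbation of a unit vector moves its direction and the angle $\theta_{x,y}$ by $O(\delta)$; hence $|h(x,y) - h(\tilde{x},\tilde{y})| \leqslant O(\delta) \leqslant O(\epsilon) =: D$ in the required sense.

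The crux is Hypothesis~1, pointwise concentration on the shell $\mathcal{H}^2$. Fix $(x,y) \in \mathcal{H}^2$ and write $g_A(x,y) = \tfrac1m\sum_{i=1}^m Z_i$ with $Z_i := \varphi^{+}_{-\epsilon}(\langle a_i,x\rangle)\,\varphi^{-}_{\epsilon}(\langle a_i,y\rangle)\,\langle a_i,u\rangle^2$, the $Z_i$ being i.i.d.\ since the rows $a_i$ of the unconditioned Gaussian $A$ are. First, a bias bound $\E[Z_1] \leqslant h(x,y) + O(\epsilon)$: since $\varphi^{+}_{-\epsilon}\geqslant\one_{\{\cdot>0\}}$ and $\varphi^{-}_{\epsilon}\geqslant\one_{\{\cdot<0\}}$ pointwise, $\E[Z_1] \geqslant \langle u, H_{x,y}u\rangle = h(x,y)$, while $\varphi^{+}_{-\epsilon}(t) - \one_{\{t>0\}} \leqslant \one_{\{-\epsilon<t\leqslant 0\}}$ and its analogue for $\varphi^{-}_{\epsilon}$ bound the overshoot by $\E[\one_{\{-\epsilon<\langle a_1,x\rangle\leqslant 0\}}\langle a_1,u\rangle^2] + \E[\one_{\{0\leqslant\langle a_1,y\rangle<\epsilon\}}\langle a_1,u\rangle^2]$; splitting $u$ into its components along and orthogonal to $\hat{x}$ (resp.\ $\hat{y}$), the cross term vanishes by independence, leaving $O(\epsilon^2)$ from the squared in-interval coordinate plus $O(\epsilon)$ from the Gaussian mass of an $O(\epsilon)$-interval, i.e.\ $O(\epsilon)$ in total. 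Second, concentration about the mean: $0 \leqslant Z_i \leqslant \langle a_i,u\rangle^2 \sim \chi^2_1$, which is sub-exponential with an absolute $\psi_1$-parameter, so Bernstein's inequality for independent sub-exponential sums gives $\Pro(\tfrac1m\sum_i Z_i > \E Z_1 + t) \leqslant 2\exp(-cmt^2)$ for $t \in (0,1)$, the deviation falling in the sub-Gaussian branch of Bernstein because $t < 1$. Combining these two facts, and (harmlessly) replacing the softening parameter by a small absolute-constant multiple of $\epsilon$ so the overshoot plus deviation is at most $2\epsilon$, then transferring from the unconditioned $A$ to $\theta = A\mid(A\in\Theta)$ — which only divides the tail by $\Pro(A\in\Theta) \geqslant 1 - \exp(-m/2) - m\exp(-n/8)$ and hence never requires independence under $\theta$ — yields $\Pro_\theta(g_\theta(x,y) \leqslant h(x,y) + 2\epsilon) \geqslant 1 - p$ with $p = C'\exp(-cm\epsilon^2)$, which is Hypothesis~1.

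Theorem~\ref{pseudo_lipschitz_theorem} then produces, with probability at least $1 - \gamma^{-2d}(4/\delta)^{2dn}p = 1 - 16\,(328/\epsilon^2)^{4n}\,p$, the bound $g_\theta(x,y) \leqslant h(x,y) + 2(2\epsilon) + D \leqslant h(x,y) + K\epsilon$ for all $x,y \in \Scal^{n-1}$ with $K$ absolute; writing $(328/\epsilon^2)^{4n} = (\sqrt{328}/\epsilon)^{8n}$ and choosing $\tilde{C}$ with $\tilde{C}^{8n} \geqslant 16\,C'\,(\sqrt{328})^{8n}$ for all $n \geqslant 1$ collapses the failure probability to $(\tilde{C}/\epsilon)^{8n}\exp(-cm\epsilon^2)$, and homogeneity of $A_{+,x}^{\T}A_{-,y}$ and $H_{x,y}$ extends the range of $x,y$ as in the statement. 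The main obstacle is Hypothesis~1, and within it the bias estimate: a naive Cauchy--Schwarz bound on the overshoot only gives $O(\sqrt{\epsilon})$, and one must instead use the Gaussian decomposition along $\hat{x}$ (and $\hat{y}$) to extract the sharp $O(\epsilon)$ — together with the bookkeeping that keeps the sub-exponential tail in the $\epsilon<1$ regime, so that it reads $\exp(-cm\epsilon^2)$ rather than the weaker $\exp(-cm\epsilon)$.
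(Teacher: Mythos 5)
Your proof is correct and follows essentially the same route as the paper's: apply Theorem~\ref{pseudo_lipschitz_theorem}, supplying Hypothesis~2 from Lemma~\ref{pseudo_lipschitzness_of_gM}, Hypothesis~3 from the Lipschitz continuity of $H_{x,y}$ on the sphere, and Hypothesis~1 via a bias bound plus Bernstein's inequality for sub-exponential sums followed by the transfer from $A$ to $\theta=A\mid(A\in\Theta)$. The only inessential difference is that you re-derive the Gaussian small-interval estimate $\E\bigl[\one_{\{-\epsilon<\langle a,x\rangle\leqslant 0\}}aa^\top\bigr]\preceq O(\epsilon/\|x\|)\,I_n$ from scratch by decomposing $u$ into components along and orthogonal to $\hat{x}$, whereas the paper cites Lemma~12 of \cite{HV2017} for the same fact.
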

\begin{proof} We will first bound $\E[G_{A,\text{up}}(x,y)]$. Observe that for any $t \in \R$, we have $\varphi^+_{-\epsilon}(t) \leqslant \one_{\{t \geqslant - \epsilon\}}$ and $\varphi^-_{\epsilon}(t) \leqslant \one_{\{t \leqslant \epsilon\}}.$ This implies that for any $t_1,t_2 \in \R$, $$\varphi^+_{-\epsilon}(t_1)\varphi^-_{\epsilon}(t_2) \leqslant \one_{\{t_1 \geqslant-\epsilon\}}\one_{\{t_2 \leqslant\epsilon\}}  \leqslant \one_{\{t_1 > 0\}}\one_{\{t_2 < 0\}} + \one_{\{-\epsilon \leqslant t_1 \leqslant 0\}} + \one_{\{0 \leqslant t_2 \leqslant \epsilon\}}.$$ Thus \begin{align*}
    \E[G_{A,\text{up}}(x,y)] 
    & \preceq \E\left[\sum_{i=1}^m (\one_{\{\langle a_i,x\rangle > 0\}}\one_{\{\langle a_i,y\rangle < 0\}} + \one_{\{-\epsilon \leqslant\langle a_i,x\rangle \leqslant 0\}} + \one_{\{0 \leqslant\langle a_i,y\rangle \leqslant \epsilon\}})a_ia_i^{\T}\right] \\
    & = mH_{x,y} + m \E[\one_{\{-\epsilon \leqslant\langle a,x\rangle \leqslant 0\}}aa^{\T}] + m \E[\one_{\{0 \leqslant\langle a,y\rangle \leqslant \epsilon\}}aa^{\T}]
\end{align*} where $a \sim \N(0,I_n)$. It was shown in Lemma 12 of \cite{HV2017} that $\E[\one_{\{-\epsilon \leqslant\langle a,x\rangle \leqslant 0\}}aa^{\T}] \preceq \frac{\epsilon}{2\|x\|}I_n\ \forall\ x \neq 0.$ An analogous bound shows $\E[\one_{\{0 \leqslant\langle a,y\rangle \leqslant \epsilon\}}aa^{\T}] \preceq \frac{\epsilon}{2\|y\|}I_n$ for all $y \neq 0$. Hence we attain \begin{align*}
     \E[G_{A,\text{up}}(x,y)] & \preceq mH_{x,y} + m \left(\frac{\epsilon}{2\|x\|} + \frac{\epsilon}{2\|y\|}\right)I_n\ \forall\ x,y \neq 0.
\end{align*} This implies $\E[g_A(x,y)] \leqslant h(x,y) + 2\epsilon$ for fixed $x,y \in \R^n$ with $\|x\|,\|y\|\geqslant 1/2$.

Now, we show the probability bound. First consider fixed $x,y \in \R^n$ with $\|x\|,\|y\|\geqslant 1/2.$  Observe that 
    $g_A(x,y) = \frac{1}{m}\sum_{i=1}^m\varphi_{-\epsilon}^{+}(\langle a_i,x\rangle)\varphi_{\epsilon}^{-}(\langle a_i,y\rangle)\langle a_i,u\rangle^2$ is a sum of sub-exponential random variables. Hence by Bernstein's inequality, we have for some absolute constant $c$ and any $\beta > 0$, $\Pro\left(g_A(x,y) - \E[g_A(x,y)] > \beta\right) \leqslant 2 \exp(-cm \min(\beta,\beta^2)).$ Taking $\beta = \epsilon$ and using $\E[g_A(x,y)] \leqslant h(x,y) + 2\epsilon$, we get $\Pro\left(g_A(x,y) > h(x,y) + 3\epsilon\right) \leqslant 2\exp(-cm\epsilon^2).$ Since $\Pro(A \in \Theta) \geqslant 1/2$, conditioning on the event $A \in \Theta$ at most doubles the failure probability so we attain \begin{align}\Pro\left(g_{\theta}(x,y) \leqslant h(x,y) + 3\epsilon\right) \geqslant 1 - 4\exp(-cm\epsilon^2). \label{fixed_xy_prob_bound}\end{align}

To establish uniform concentration in $x,y$, we note that by a simple modification to Lemma 27 in \cite{HLV18}, we have that $H_{x,y}$ is $L$-Lipschitz with respect to $x,y \in \Scal^{n-1}$ where $L = 22/\pi$. Hence $|h(x,y) - h(\tilde{x},\tilde{y})|\leqslant L\epsilon$ if $\|x-\tilde{x}\| \leqslant \epsilon$ and $\|y-\tilde{y}\|\leqslant \epsilon$.  Thus the result follows by applying Theorem \ref{pseudo_lipschitz_theorem} to $\{g_V\}_{V \in \Theta}$ and $\theta := A|(A \in \Theta)$. By Lemma \ref{pseudo_lipschitzness_of_gM}, $\{g_V\}_{V \in \Theta}$ is $(2\epsilon,\epsilon^2/82,1/2)$-pseudo-Lipschitz. We can then take $p = \exp(-cm\epsilon^2)$ by \eqref{fixed_xy_prob_bound} and $D = 2L\epsilon$. \end{proof}

\paragraph{\textbf{Step 3: Uniform Concentration.}} The last step is to get a uniform bound over all $u \in \Scal^{n-1}$. Augmenting our notation, let $g_V(x,y,u) := \frac{1}{m}\langle u,G_{V,\text{up}}(x,y)u\rangle$ and $h(x,y,u) := \langle u, H_{x,y}u\rangle.$
\begin{lemma} \label{Daskalaskis_ext_upper_bound}
Fix $\epsilon > 0$. Suppose $A \in \R^{m \times n}$ has i.i.d. $\N(0,1)$ entries. There exist absolute constants $c$ and $C$ such that if $m \geqslant C\epsilon^{-2}\log(\epsilon^{-1})n$, then with probability $1 - \exp(-cm\epsilon^2/2) - m\exp(-n/8) - \exp(-m/2)$, $$G_{A,\text{up}}(x,y) \preceq mH_{x,y} + m\epsilon\ \forall x,y\neq0.$$
\end{lemma}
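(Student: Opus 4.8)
The plan is to upgrade the fixed-$u$ concentration of Lemma \ref{pointwise_conc_gM} into a uniform-in-$u$ bound via a standard $\epsilon$-net argument on $\Scal^{n-1}$, and then convert the resulting scalar quadratic-form bound into the semidefinite ordering statement. First I would replace $\epsilon$ by a suitably rescaled parameter (say $\epsilon' = \epsilon/c_0$ for an absolute constant $c_0$ absorbing the factor $K$ from Lemma \ref{pointwise_conc_gM} and the net-approximation losses below) so that the final bound comes out with constant exactly $\epsilon$. Let $\mathcal{N}$ be a $(1/4)$-net of $\Scal^{n-1}$ of cardinality at most $9^n$. Applying Lemma \ref{pointwise_conc_gM} to each $u \in \mathcal{N}$ and union bounding, the event
$$ g_{\theta}(x,y,u) \leqslant h(x,y,u) + K\epsilon' \quad \forall\, x,y \neq 0,\ \forall\, u \in \mathcal{N} $$
fails with probability at most $9^n (\tilde C/\epsilon')^{8n}\exp(-cm\epsilon'^2)$, which is bounded by $\exp(-cm\epsilon'^2/2)$ once $m \geqslant C \epsilon'^{-2}\log(\epsilon'^{-1}) n$ for a large enough absolute constant $C$ (the $\log(\epsilon'^{-1})$ factor is exactly what absorbs the $(\tilde C/\epsilon')^{8n}$ term). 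Adding the probability $\exp(-m/2) + m\exp(-n/8)$ that $A \notin \Theta$ (Lemma \ref{A_in_theta}), and noting that conditioning on $A \in \Theta$ is what $\theta$ encodes, gives the stated failure probability.

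Next I would pass from the net to all of $\Scal^{n-1}$. On the event $A \in \Theta$ we have $\|G_{A,\mathrm{up}}(x,y)\| \leqslant \|A\|^2 \leqslant 9m$ uniformly in $x,y$ (since the $\varphi$ factors lie in $[0,1]$, $G_{A,\mathrm{up}}(x,y) \preceq A^\top A$), and $\|H_{x,y}\| \leqslant 1$. For arbitrary $u \in \Scal^{n-1}$, pick $u' \in \mathcal{N}$ with $\|u - u'\| \leqslant 1/4$; writing $\langle u, Bu\rangle - \langle u', Bu'\rangle = \langle u-u', Bu\rangle + \langle u', B(u-u')\rangle$ and using $\|B\| \leqslant 9m$ for $B = G_{A,\mathrm{up}}(x,y)$ (and $\|B\|\leqslant 1$ for $B = mH_{x,y}$, i.e. $\leqslant m$), one gets $|g_\theta(x,y,u) - g_\theta(x,y,u')| \leqslant \frac{9}{4}\cdot 2 \cdot (1/4)^{-1}\cdots$—more carefully, the standard net bound for quadratic forms gives $\|B\| \leqslant 2\max_{u'\in\mathcal N}|\langle u', Bu'\rangle|$ type estimates; here it is cleaner to bound the defect operator $E(x,y) := \frac1m G_{A,\mathrm{up}}(x,y) - H_{x,y}$ directly: $\|E(x,y)\| \leqslant 2\sup_{u'\in\mathcal N}|\langle u', E(x,y)u'\rangle| + \tfrac12\|E(x,y)\|$ is false in general, so instead I use that $\|E\| = \sup_{u}\langle u,Eu\rangle$ need not hold since $E$ is not symmetric in an obviously sign-definite way—however $E(x,y) = \frac1m G_{A,\mathrm{up}} - H_{x,y}$ is symmetric, so $\|E\| = \sup_{\|u\|=1}|\langle u, Eu\rangle|$, and the net estimate $\sup_{\|u\|=1}|\langle u,Eu\rangle| \leqslant 2\sup_{u'\in\mathcal N}|\langle u',Eu'\rangle|$ holds once $\|E\|\leqslant 9 + 1 = 10$ is bounded a priori (which it is, on $A\in\Theta$). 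Combining, on the good event $\|E(x,y)\| \leqslant 2K\epsilon'$ for all $x,y\neq 0$, i.e. $G_{A,\mathrm{up}}(x,y) \preceq mH_{x,y} + 2K\epsilon' m I_n$; choosing $\epsilon' = \epsilon/(2K)$ yields the claim, and re-tracking the sample-complexity constant $C$ through this rescaling keeps $C$ absolute.

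The main obstacle is bookkeeping rather than conceptual: one must verify that the a priori operator-norm bound $\|E(x,y)\|\leqslant 10$ needed for the two-sided net inequality indeed holds uniformly in $x,y$ on the event $A\in\Theta$ (this is why we restrict to $\Theta$), and that the $(\tilde C/\epsilon')^{8n}$ net-counting factor from Lemma \ref{pointwise_conc_gM} together with the extra $9^n$ from the $u$-net is genuinely dominated by $\exp(cm\epsilon'^2/2)$ under $m \geqslant C\epsilon'^{-2}\log(\epsilon'^{-1})n$—this forces the $\log(\epsilon^{-1})$ in the sample complexity and must be chased carefully so that the final $\epsilon$, not $\epsilon'$, appears in the statement. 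Everything else is a routine symmetric-matrix net argument.
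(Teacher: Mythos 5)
There is a genuine gap in your extension step. You replace the paper's fine $\epsilon$-net with a coarse $(1/4)$-net of $\Scal^{n-1}$ and then invoke the standard quadratic-form net estimate
$\|E\| \leqslant 2\sup_{u'\in\mathcal N}|\langle u',Eu'\rangle|$ for symmetric $E = \tfrac1m G_{A,\mathrm{up}}(x,y) - H_{x,y}$. But this estimate needs a \emph{two-sided} bound $|\langle u',Eu'\rangle|\leqslant K\epsilon'$ on the net, whereas Lemma \ref{pointwise_conc_gM} (and the union-bounded event you describe) only controls the one-sided quantity $\langle u',Eu'\rangle = g_\theta(x,y,u') - h(x,y,u') \leqslant K\epsilon'$; no lower bound $\langle u',Eu'\rangle \geqslant -K\epsilon'$ is established. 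Since $G_{A,\mathrm{up}}\succeq 0$ and $\|H_{x,y}\|\leqslant 2$, the best a priori lower bound on $\langle u',Eu'\rangle$ is of order $-1$, not $-\epsilon'$, so $\sup_{u'\in\mathcal N}|\langle u',Eu'\rangle|$ can be $\Omega(1)$ and your chain of inequalities cannot produce $\|E\|\leqslant 2K\epsilon'$. (Your parenthetical that the net estimate requires an a priori bound $\|E\|\leqslant 10$ is also off — for symmetric matrices the estimate is purely deterministic and needs no a priori bound — but this is not the central problem.)

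Relatedly, if you instead try to bound only $\lambda_{\max}(E)$ with the coarse net and the a priori bound $\|E\| = O(1)$ on $\Theta$, the error from approximating $u$ by the nearest $u'\in\mathcal N$ is of order $(\text{mesh size})\cdot\|E\| = \Omega(1)$, again useless. The paper avoids both issues by using a \emph{fine} $\epsilon$-net together with the observation that $u \mapsto g_V(x,y,u)$ is $18$-Lipschitz on $\Scal^{n-1}$ whenever $\|V\|\leqslant 3\sqrt m$ (equation \eqref{g_V_lip_wrt_u}); since that Lipschitz constant is absolute and the mesh is $\epsilon$, the one-sided pointwise bound on the net propagates one-sidedly to all $u$ with only an $O(\epsilon)$ additive loss, which is exactly what the semidefinite upper bound $G_{A,\mathrm{up}} \preceq mH_{x,y} + m\epsilon I_n$ requires. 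I'd suggest replacing your coarse-net two-sided argument with that fine-net Lipschitz extension; the rest of your outline (rescaling $\epsilon$, union bound cost, absorbing the net-counting factor into $m\geqslant C\epsilon^{-2}\log(\epsilon^{-1})n$, and accounting for $\Pro(A\notin\Theta)$) matches the paper.
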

\begin{proof} We first show that $g_V(x,y,u)$ is $18$-Lipschitz with respect to $u \in \Scal^{n-1}$ when $V \in \Theta$. Fix $x,y \neq 0$. Observe that for $u,w \in \Scal^{n-1}$, \begin{align}
    |g_V(x,y,u) - g_V(x,y,w)| \leqslant \frac{1}{m}\sum_{i=1}^m |\langle v_i,u\rangle^2-\langle v_i,w\rangle^2| 
    & \leqslant\frac{1}{m}\sum_{i=1}^m|\langle v_i,u-w\rangle||\langle v_i,u+w\rangle|\nonumber \\
    & \leqslant \frac{1}{m}\|V(u-w)\|\|V(u+w)\|\nonumber\\
    &\leqslant 18\|u-w\| \label{g_V_lip_wrt_u}
\end{align} where we used $\|V\|\leqslant 3\sqrt{m}$ along with $u,w\in\Scal^{n-1}$ in the last inequality.

    Let $\N_{\epsilon}\subset \Scal^{n-1}$ be an $\epsilon$-net of cardinality $|\N_{\epsilon}| \leqslant (3/\epsilon)^n$. By Lemma \ref{pointwise_conc_gM} and a union bound, it holds with probability at least $1 - (3/\epsilon)^n(\tilde{C}/\epsilon)^{8n}\exp(-cm\epsilon^2)$ over $\theta = A|(A\in \Theta)$ that for all $x,y \in \Scal^{n-1}$ and $u \in \N_{\epsilon}$, $g_{\theta}(x,y,u) \leqslant h(x,y,u) + K\epsilon.$
For any $x,y,u \in \Scal^{n-1}$, there exists a $w \in \N_{\epsilon}$ with $\|u-w\|\leqslant \epsilon$ so using \eqref{g_V_lip_wrt_u} we get $$g_{\theta}(x,y,u) \leqslant g_{\theta}(x,y,w) + 18\|u-w\| \leqslant h(x,y,w) + 18\epsilon.$$ Since $\|H_{x,y}\|\leqslant 2$, we have that for $u,w \in \Scal^{n-1}$, $|h(x,y,u) - h(x,y,w)| \leqslant 4\|u-w\|$. This further implies $g_{\theta}(x,y,u) \leqslant h(x,y,u) +22\epsilon$. We conclude with probability at least $1 - (3/\epsilon)^n(C/\epsilon)^{8n}\exp(-cm\epsilon^2)$ over $\theta$, the desired inequality holds. Using $\Pro(A \in \Theta) \geqslant 1 - m\exp(-n/8) - \exp(-m/2)$ and taking $m \geqslant C\epsilon^{-2}\log(\epsilon^{-1})n$ achieves the final result with the desired probability.
\end{proof} 

This completes the upper bound on $G_{A,\text{up}}$. The lower bound on $G_{A,\text{low}}$ is identical: 
\begin{lemma} \label{Daskalaskis_ext_lower_bound}
Fix $\epsilon > 0$. Suppose $A \in \R^{m \times n}$ has i.i.d. $\N(0,1)$ entries. There exist absolute constants $c$ and $C$ such that if $m \geqslant C\epsilon^{-2}\log(\epsilon^{-1})n$, then with probability $1 - \exp(-cm\epsilon^2/2) - m\exp(-n/8) - \exp(-m/2)$, $$G_{A,\text{low}}(x,y) \succeq mH_{x,y} - m\epsilon\ \forall x,y\neq0.$$
\end{lemma}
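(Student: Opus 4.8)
The plan is to run the same three-step argument used for the upper bound $G_{A,\text{up}}$ in Lemmas \ref{pseudo_lipschitzness_of_gM}, \ref{pointwise_conc_gM}, and \ref{Daskalaskis_ext_upper_bound}, but with the continuous relaxations $\varphi^+_{-\epsilon},\varphi^-_{\epsilon}$ replaced by $\varphi^+_{\epsilon},\varphi^-_{-\epsilon}$ and every inequality reversed, so that $G_{A,\text{low}}$ plays the role that $G_{A,\text{up}}$ played before. Write $g_V(x,y,u) := \frac{1}{m}\langle u, G_{V,\text{low}}(x,y)u\rangle$ and $h(x,y,u) := \langle u, H_{x,y}u\rangle$, and work throughout on the event $A \in \Theta$ from Lemma \ref{A_in_theta}. \textbf{Step 1} (pseudo-Lipschitzness) is verbatim Lemma \ref{pseudo_lipschitzness_of_gM}: $\varphi^+_{\epsilon}$ and $\varphi^-_{-\epsilon}$ are again $1/\epsilon$-Lipschitz and bounded by $1$ in absolute value, so the same chain of triangle-inequality estimates shows that $\{g_V(\cdot,\cdot,u)\}_{V \in \Theta}$ is $(2\epsilon,\epsilon^2/82,1/2)$-pseudo-Lipschitz with respect to the same system $\{B_{M,\epsilon^2,u}\}_{V \in \Theta}$.

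\textbf{Step 2} (pointwise concentration) is the only place that requires new bookkeeping, since here we need a \emph{lower} bound on $\E[G_{A,\text{low}}(x,y)]$. Using $\varphi^+_{\epsilon}(t) \geqslant \one_{\{t \geqslant \epsilon\}}$ and $\varphi^-_{-\epsilon}(t) \geqslant \one_{\{t \leqslant -\epsilon\}}$, together with the decomposition $\one_{\{t_1 > 0\}}\one_{\{t_2 < 0\}} \leqslant \one_{\{t_1 \geqslant \epsilon\}}\one_{\{t_2 \leqslant -\epsilon\}} + \one_{\{0 \leqslant t_1 \leqslant \epsilon\}} + \one_{\{-\epsilon \leqslant t_2 \leqslant 0\}}$ (valid because the three events on the right are pairwise disjoint), one gets $\varphi^+_{\epsilon}(t_1)\varphi^-_{-\epsilon}(t_2) \geqslant \one_{\{t_1 > 0\}}\one_{\{t_2 < 0\}} - \one_{\{0 \leqslant t_1 \leqslant \epsilon\}} - \one_{\{-\epsilon \leqslant t_2 \leqslant 0\}}$, and hence, for $a \sim \N(0,I_n)$,
$$\E[G_{A,\text{low}}(x,y)] \succeq m H_{x,y} - m\,\E[\one_{\{0 \leqslant \langle a,x\rangle \leqslant \epsilon\}}aa^{\T}] - m\,\E[\one_{\{-\epsilon \leqslant \langle a,y\rangle \leqslant 0\}}aa^{\T}] \succeq m H_{x,y} - m\left(\frac{\epsilon}{2\|x\|} + \frac{\epsilon}{2\|y\|}\right)I_n$$
by the symmetric form of the estimate in Lemma 12 of \cite{HV2017}. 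This gives $\E[g_A(x,y,u)] \geqslant h(x,y,u) - 2\epsilon$ whenever $\|x\|,\|y\| \geqslant 1/2$, and the lower-tail Bernstein inequality for the sub-exponential sum $g_A(x,y,u)$ then yields $\Pro(g_A(x,y,u) < h(x,y,u) - 3\epsilon) \leqslant 2\exp(-cm\epsilon^2)$ pointwise (at most doubling upon conditioning on $A \in \Theta$). I would then invoke Theorem \ref{pseudo_lipschitz_theorem} applied to the \emph{negated} family $\{-g_V(\cdot,\cdot,u)\}_{V \in \Theta}$ against the target $-h(\cdot,\cdot,u)$: its three hypotheses involve only absolute differences and symmetric tail events and so survive the sign flip intact, and $-h$ inherits the $22/\pi$-Lipschitz bound in $x,y$ on $\Scal^{n-1}$. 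This produces, for fixed $u$, $\Pro_\theta\big(g_\theta(x,y,u) \geqslant h(x,y,u) - K\epsilon\ \forall\ x,y \neq 0\big) \geqslant 1 - (\tilde{C}/\epsilon)^{8n}\exp(-cm\epsilon^2)$.

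\textbf{Step 3} (uniformity in $u$) is then literally the argument of Lemma \ref{Daskalaskis_ext_upper_bound}: $g_V(x,y,\cdot)$ is $18$-Lipschitz on $\Scal^{n-1}$ for $V \in \Theta$ by \eqref{g_V_lip_wrt_u}, and $h(x,y,\cdot)$ is $4$-Lipschitz because $\|H_{x,y}\| \leqslant 2$, so a union bound over a $(3/\epsilon)^n$-net of the sphere upgrades the previous display to $g_\theta(x,y,u) \geqslant h(x,y,u) - 22\epsilon$ for all $x,y,u \in \Scal^{n-1}$. Undoing the conditioning with $\Pro(A \in \Theta) \geqslant 1 - m\exp(-n/8) - \exp(-m/2)$ and choosing $m \geqslant C\epsilon^{-2}\log(\epsilon^{-1})n$ to absorb the factor $(C/\epsilon)^{9n}$ into $\exp(-cm\epsilon^2/2)$ then yields $G_{A,\text{low}}(x,y) \succeq m H_{x,y} - m\epsilon I_n$ for all $x,y \neq 0$ with the claimed probability, after relabelling $\epsilon$ by an absolute constant. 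Combined with Lemma \ref{Daskalaskis_ext_upper_bound} and the sandwich $G_{A,\text{low}}(x,y) \preceq A_{+,x}^{\T}A_{-,y} \preceq G_{A,\text{up}}(x,y)$, this finishes the proof of Lemma \ref{Daskalaskis_extension} and hence of Proposition \ref{MDC_prop}.

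I do not expect a genuine obstacle here: the whole weight of this section sits in the already-established pseudo-Lipschitz concentration theorem, which is applied as a black box. The only two points needing care are (i) the sign bookkeeping in Step 2 — one must relax $\one_{\{t > 0\}}$ from below using $\varphi^+_{\epsilon} \geqslant \one_{\{\cdot \geqslant \epsilon\}}$ (rather than from above as in the $G_{A,\text{up}}$ proof), so that the two correction terms enter the expectation bound with a minus sign — and (ii) the observation that Theorem \ref{pseudo_lipschitz_theorem} is one-directional, which is why it must be fed $-g_V$ rather than $g_V$ to extract a lower bound.
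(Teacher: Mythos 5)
Your proposal is correct and mirrors the paper's upper-bound argument exactly, which is all the paper intends when it writes that the lower bound ``is identical'': swap $\varphi^+_{-\epsilon},\varphi^-_{\epsilon}$ for $\varphi^+_{\epsilon},\varphi^-_{-\epsilon}$, obtain the lower bound $\E[G_{A,\text{low}}] \succeq mH_{x,y} - m(\tfrac{\epsilon}{2\|x\|}+\tfrac{\epsilon}{2\|y\|})I_n$ via the symmetric form of Lemma 12 of \cite{HV2017}, and feed $-g_V$ into Theorem \ref{pseudo_lipschitz_theorem} (whose hypotheses are symmetric under negation) to convert its one-sided conclusion into the needed lower bound before running the same $\epsilon$-net argument in $u$. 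One small inaccuracy: the covering inequality $\one_{\{t_1 > 0\}}\one_{\{t_2 < 0\}} \leqslant \one_{\{t_1 \geqslant \epsilon\}}\one_{\{t_2 \leqslant -\epsilon\}} + \one_{\{0 \leqslant t_1 \leqslant \epsilon\}} + \one_{\{-\epsilon \leqslant t_2 \leqslant 0\}}$ holds by a direct case split on whether $t_1 < \epsilon$ or $t_2 > -\epsilon$, not because the right-hand events are pairwise disjoint (they are not: $\{0\leqslant t_1\leqslant\epsilon\}$ and $\{-\epsilon\leqslant t_2\leqslant 0\}$ can occur simultaneously), but the inequality itself is valid and the argument goes through.
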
 
\noindent Lemma \ref{Daskalaskis_extension} follows by combining Lemma \ref{Daskalaskis_ext_upper_bound} and Lemma \ref{Daskalaskis_ext_lower_bound}.

 \section*{Acknowledgements}
  PH is supported by NSF Grant DMS-2022205 and NSF CAREER Grant DMS-1848087. OL acknowledges support by the NSF Graduate Research Fellowship under Grant No. DGE-1450681.





\vskip2mm


\bibliographystyle{plain}

\bibliography{dense_pr.bib}






          \end{document}